\newcommand{\tikzsymbol}[2][circle]{\tikz[baseline=-0.5ex]\node[inner
sep=2pt,shape=#1,draw,#2,fill=black]{};}%
\newtheorem{theorem}{Theorem}[section]
\def\arrvline{\hfil\kern\arraycolsep\vline\kern-\arraycolsep\hfilneg}
\newcommand{\ie}[0]{\emph{i.e.},~}
\newcommand{\eg}[0]{\emph{e.g.},~}
\newcommand{\defeq}{\ensuremath{\doteq}}
\DeclareMathOperator{\Tr}{Tr}
\newcommand{\kw}[1]{{\small\textsc{\MakeLowercase{#1}}}}
\newcommand{\mat}[1]{\ensuremath{{\mathbf{\MakeUppercase{{#1}}}}}}
\renewcommand{\vec}[1]{\ensuremath{\mathbf{\MakeLowercase{{#1}}}}}
\newcommand{\set}[1]{\ensuremath{\mathbb{#1}}}
\newcommand{\gr}[1]{\ensuremath{\mathcal{#1}}}
\newcommand{\ly}[1]{\ensuremath{^{(#1)}}}
\newcommand{\Reals}{\mathds{R}}
\newcommand{\eye}{\mat{I}}
\newcommand{\Ps}{\set{P}}
\newcommand{\Qs}{\set{Q}}
\newcommand{\Sg}{\gr{S}}
\newcommand{\Cg}{\gr{C}}
\newcommand{\Gg}{\gr{G}\xspace}
\renewcommand{\gg}{\gr{g}}
\newcommand{\Hg}{\gr{H}}
\newcommand{\hg}{\gr{h}}
\newcommand{\Kg}{\gr{K}}
\newcommand{\Bg}{\gr{B}}
\newcommand{\bg}{\gr{b}}
\newcommand{\kg}{\gr{k}}
\newcommand{\Ug}{\gr{U}}
\newcommand{\Wm}{\mat{W}}
\newcommand{\Xm}{\mat{X}}
\newcommand{\Gm}{\mat{G}}
\newcommand{\xv}{\vec{x}}
\newcommand{\W}{\Wm}
\newcommand{\X}{\Xm}
\newcommand{\Ns}{\set{N}}
\newcommand{\Ms}{\set{M}}
\newcommand{\vox}{\kw{Wreath Product Net.} (ours)\xspace}
\newcommand{\comb}{\kw{Wreath Product Net. + Attn} (ours)\xspace}
\newtheorem{claim}{Claim}
\newtheorem{example}{Example}
\title{Equivariant Maps for Hierarchical Structures}
\author{
  Renhao Wang, Marjan Albooyeh\thanks{Currently at Borealis AI. Work done while at UBC.}\\
  University of British Columbia,\\
  \texttt{\{renhaow,albooyeh\}@cs.ubc.ca} \\
\And
  Siamak Ravanbakhsh\\
  McGill University \& Mila \\
  \texttt{siamak@cs.mcgill.ca} \\
}
\begin{document}

\maketitle

\begin{abstract}
While using invariant and equivariant maps, it is possible to apply deep learning to a range of primitive data structures, a formalism for dealing with hierarchy is lacking. This is a significant issue because many practical structures are hierarchies of simple building blocks; some examples include sequences of sets, graphs of graphs, or multiresolution images. Observing that the symmetry of a hierarchical structure is the ``wreath product'' of symmetries of the building blocks, we express the equivariant map for the hierarchy using an intuitive combination of the equivariant linear layers of the building blocks. More generally, we show that \emph{any} equivariant map for the hierarchy has this form. To demonstrate the effectiveness of this approach to model design, we consider its application in the semantic segmentation of point-cloud data. By voxelizing the point cloud, we impose a hierarchy of translation and permutation symmetries on the data and report state-of-the-art on \kw{semantic3d}, \kw{s3dis}, and \kw{vkitti}, that include some of the largest real-world point-cloud benchmarks.

\end{abstract}

\section{Introduction}
\label{intro}

In designing deep models for structured data, equivariance (invariance) of the model to transformation groups has proven to be a powerful inductive bias, which enables sample efficient learning. A widely used family of equivariant deep models  constrain the feed-forward layer so that specific transformations of the input lead to the corresponding transformations of the output. A canonical example is the convolution layer, in which the constrained MLP is equivariant to translation operations. Many recent works have extended this idea to design equivariant networks for more exotic structures such as sets, exchangeable tensors and graphs, as well as relational and geometric structures.

This paper considers a nested hierarchy of such structures, or more generally, any hierarchical composition of transformation symmetries. These hierarchies naturally appear in many settings: for example, the interaction between nodes in a social graph may be a sequence or a set of events. Or in diffusion tensor imaging of the brain, each subject may be modeled as a set of sequences, where each sequence is a fibre bundle in the brain. The application we consider in this paper models point clouds as 3D images, where each voxel is a set of points with coordinates relative to the center of that voxel. 

\begin{figure}
\centering
    \includegraphics[width=1\textwidth]{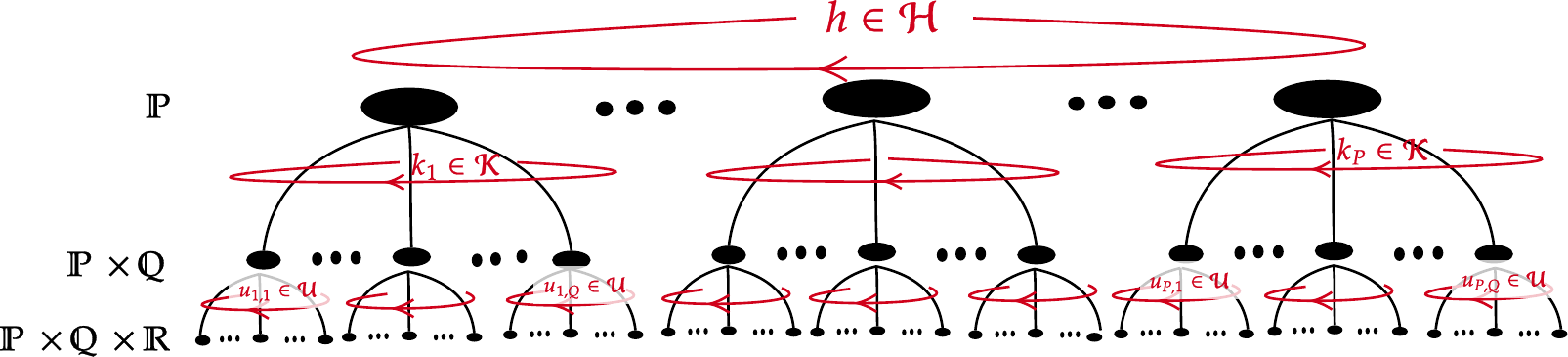}
    \caption{\footnotesize{Wreath product can express the symmetries of hierarchical structures: wreath product of three groups $\Ug \wr \Kg \wr \Hg$ acting on the set of elements $\Ps \times \Qs \times \set{R}$, can be seen as \emph{independent copies} of groups, $\Hg$, $\Kg$ and $\Ug$ at different level of hierarchy acting $\circlearrowright$ on copies of $\Ps, \Qs$ and $\set{R}$. Intuitively, a linear map $\Wm_{\Ug \wr \Kg \wr \Hg}: \mathds{R}^{PQR} \to \mathds{R}^{PQR}$ equivariant to $\Ug \wr \Kg \wr \Hg$, performs pooling over leaves under each inner node \tikzsymbol[ellipse]{minimum width=8pt}
, applies equivariant map for each inner structure (\ie $\Wm_\Ug$, $\Wm_\Kg$ and $\Wm_\Hg$ respectively), and broadcasting the output back to the leaves. A $\Ug \wr \Kg \wr \Hg$-equivariant map can be constructed as the sum of these three contributions, from equivariant maps at each level of hierarchy.}}\label{fig:demo}
\end{figure}

To get an intuition for a hierarchy of symmetry transformations, consider the example of a sequence of sequences -- \eg  a text document can be viewed as a sequence of sentences, where each sentence is itself a sequence of words. Here, each inner sequence as well as the outer sequence is assumed to possess an ``{independent}'' translation symmetry. 
Contrast this  with symmetries of an image (2D translation), where all inner sequences (say row pixels)
translate together, so we have a total of two translations. This is the key difference between the \emph{wreath product} of two translation groups (former) and their \emph{direct product} (latter). It is the wreath product that often appears in nested structures.
As is evident from this example, the wreath product results in a significantly larger set of transformations, and as we elaborate later, it provides a stronger inductive bias. 

We are interested in application of equivariant/invariant deep learning to this type of nested structure. The building blocks of equivariant and invariant MLPs are \emph{equivariant linear maps} of the feedforward layer.
We show that any equivariant linear map for the hierarchical structure is built using equivariant maps for the individual symmetry group at different levels of the hierarchy. Our construction only uses additional pooling and broadcasting operations; see \cref{fig:demo}. 

In the following, after discussing related works in \cref{sec:related}, we give a short background on equivariant MLPs in \cref{sec:preliminaries}. \cref{sec:method} starts by giving the closed form of equivariant maps for direct product of groups before moving to the more difficult case of wreath product in \cref{sec:wreath}. Finally, \cref{sec:application} applies this idea to impose a hierarchical structure on 3D point clouds. We show that the equivariant map for this hierarchical structure achieves state-of-the-art performance on the largest benchmark datasets for 3D semantic segmentation.

\section{Related Works}
\label{sec:related}
Group theory has a long history in signal processing~\cite{holmes1987mathematical},
where in particular Fourier transformation and group convolution for Abelian groups have found tremendous success over the past decades. However, among non-commutative groups, wreath product constructions have been the subject of few works. \citet{rockmore1995fast} give efficient procedures for fast Fourier transforms for wreath products. In a series of related works \citet{foote2000wreath,mirchandani2000wreath} investigate the wreath product for multi-resolution signal processing. The focus of their work is on the wreath product of cyclic groups for compression and filtering of image data. 

Group theory has also found many applications in machine learning~\citep{kondor2008group}, and in particular deep learning. 
Design of invariant MLPs for general groups goes back to \citet{shawe1989building}. More recently, several works investigate the design of equivariant networks for general finite~\citep{ravanbakhsh_symmetry} and infinite groups~\citep{cohen2016group,kondor2018generalization,cohen2019general}. In particular, use of the wreath product for design of networks equivariant to a hierarchy of symmetries is briefly discussed in \citep{ravanbakhsh_symmetry}. 
Equivariant networks have found many applications in learning on various structures, from image and text~\citep{lecun1995convolutional}, to sets~\citep{zaheer2017deep,qi2017pointnet}, exchangeable matrices~\citep{hartford2018deep}, graphs~\citep{kondor2018covariant,maron2018invariant,albooyeh2019incidence}, and relational data~\citep{graham2019deep}, to signals on spheres~\citep{cohen2018spherical,kondor2018clebsch}. A large body of works  investigate equivariance to Euclidean isometries; \eg~\citep{dieleman2016exploiting,thomas2018tensor,weiler2019general}.

When it comes to equivariant models for compositional structures, contributions have been sparse, with most theoretical work focusing on semidirect product (or more generally using induced representations from a subgroup) to model tensor-fields on homogeneous spaces~\citep{cohen2016steerable,cohen2019general}. Direct product of symmetric groups have been used to model interactions across sets of entities~\citep{hartford2018deep} and in generalizations to relational data~\citep{graham2019deep}. 

\paragraph{Relation to \citet{Maron2020OnLS}} \citet{Maron2020OnLS} extensively study equivariant networks 
for direct product of groups; in contrast we focus on wreath product. Since both of these symmetry transformations operate on the Cartesian product of $\Gg$-sets (see \cref{fig:demo}), the corresponding permutation groups are comparable. Indeed, direct product action is a sub-group of the imprimitive action of the wreath product. The implication is that if sub-structures in the hierarchy transform independently (\eg in sequence of sequences), then using the equivariant maps proposed in this paper gives a strictly stronger inductive bias. However, if sub-structures move together (\eg in an image), then using a model equivariant to direct product of groups is preferable. One could also compare the learning bias of these models by noting that when using wreath product, the number of independent linear operators grows with the ``sum'' of independent operators for individual building blocks; in contrast,
this number grows with the ``product'' of independent operators on blocks when using direct product. 
In the following we specifically contrast the use of direct product with wreath product for compositional structures.

\section{Preliminaries}\label{sec:preliminaries}
\subsection{Group Action}
A group $\Gg = \{\gg\}$ is a set equipped with a binary operation, such that the set is closed under this operation, $\gg \hg \in \Gg$,
the operation is associative, $\gg (\hg \kg) = (\gg \hg) \kg$, there exists identity element $\gr{e} \in \Gg$, and a unique inverse for each $\gg \in \Gg$ satisfying $\gg \gg^{-1} = \gr{e}$.
The action of $\Gg$ on a finite set $\Ns$ is a function $\alpha: \Gg \times \Ns \to \Ns$ that transforms the elements of $\Ns$, for each choice of $\gg \in \Gg$; for short we write $\gg \cdot n$ instead of $\alpha(\gg, n)$. Group actions preserve the group structure, meaning that the transformation
associated with the identity element is identity $\gr{e} \cdot n = n$, and composition of two actions is equal to the action of the composition of group elements
 $(\gg \hg) \cdot n = \gg \cdot (\hg \cdot n)$. Such a set, with a $\Gg$-action defined on it is called a $\Gg$-set.
 The group action on $\Ns$ naturally extends to $\xv \in \Reals^{|\Ns|}$,
where it defines a permutation of indices $\gg \cdot (x_1,\ldots,x_N) \defeq (x_{\gg \cdot 1}, \ldots, x_{\gg \cdot N})$.
We often use a permutation matrix $\Gm\ly{\gg} \in \{0,1\}^{N \times N}$ to represent this action -- that is $\Gm\ly{\gg} \xv = \gg \cdot \xv$. 

\subsection{Equivariant Multilayer Perceptrons}
A function $\phi: \Reals^\Ns \to \Reals^\Ms$ is equivariant to a given actions of group $\Gg$  
iff $\phi(\Gm\ly{\gg} \xv) = \tilde{\Gm}\ly{\gg} \phi(\xv)$ for any $\xv \in \Reals^N$ and $\gg \in \Gg$. That is, a symmetry transformation of the input results in the corresponding symmetry transformation of the output. Note that the action on the input and output 
may in general be different. In particular, when $\tilde{\Gm}\ly{\gg} = \eye_\Ms$ for all $\gg$ -- that is, the action on the output is trivial -- 
equivariance reduces to invariance. Here, $\eye_\Ms$ is the $M \times M$ identity matrix. For simplicity and motivated by practical design choices, we assume the same action on the input and output. 

For a feedforward layer $\phi: \xv \mapsto \sigma(\Wm \xv)$, where $\sigma$ is a point-wise non-linearity and 
$\Wm \in \Reals^{N \times N}$,  the equivariance condition above simplifies to commutativity condition
$\Gm\ly{\gg} \Wm = \Wm \Gm\ly{\gg} \forall \gg \in \Gg$. This imposes a symmetry on $\Wm$ in the form of parameter-sharing~\cite{wood1996representation,ravanbakhsh_symmetry}. While we can use computational means to solve this equation for any finite group, an efficient implementation requires a closed form solution. Several recent works derive the closed form solutions for interesting groups and structures. Note that in general the feedforward layer may have multiple input and output channels, with identical $\Gg$-action on each channel. This only require replicating
the parameter-sharing pattern in $\Wm$, for each combination of input and output channel. An Equivariant MLP
is a stack of equivariant feed-forward layers, where the composition of equivariant layers is also equivariant. 
Therefore, our task in building MLPs equivariant to finite group actions is reduced to finding equivariant linear maps in the form of parameter-sharing matrices satisfying $\Gm\ly{\gg} \Wm = \Wm \Gm\ly{\gg} \forall \gg \in \Gg$; see \cref{fig:param_sharing}(a.1,a.2) for $\Wm$ that are equivariant to $\Cg_4$, the group of circular translations of 4 objects (a.1), and symmetric group $\Sg_4$, the group of all permutations of 4 objects (a.2).

\section{Equivariant Map for Product Groups}
\label{sec:method}
In this section we formalize the \emph{imprimitive action} of the wreath product, which is used in describing the 
symmetries of hierarchical structures. We then introduce the closed form of linear maps equivariant to the wreath product of two groups. With hierarchies of more than two levels, one only needs to iterate this construction. To put this approach in perspective and to make the distinction clear, first we present the simpler case of direct product; see \cite{Maron2020OnLS} for an extensive discussion of this setting.
\clearpage
\begin{wrapfigure}{r}{0.35\textwidth}
\vspace*{-3em}
    \includegraphics[width=0.35\textwidth]{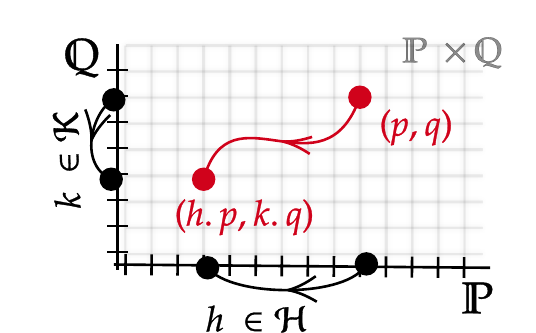}
    \caption{\footnotesize{Direct product action.}}\label{fig:direct_product}
    \vspace*{-1em}
\end{wrapfigure}
\subsection{Equivariant Linear Maps for Direct Product of Groups}\label{sec:direct}
The easiest way to combine two groups is through their {direct product} $\Gg = \Hg \times \Kg$. 
Here, the underlying set is the Cartesian product of the input sets and group operation is $(\hg, \kg) (\hg', \kg')  \defeq (\hg \hg', \kg \kg')$. 
If $\Ps$ is an $\Hg$-set and $\Qs$ a $\Kg$-set then the group $\Gg = \Hg \times \Kg$ naturally acts on $\Ns = \Ps \times \Qs$
using $(\hg, \kg) \cdot (p,q) \defeq (\hg \cdot p, \kg \cdot q)$; see \cref{fig:direct_product}.
This type of product is useful in modeling the Cartesian product of structures.
The following claim characterises the equivariant map for direct product of two groups 
using the equivariant map for building blocks.

\begin{mdframed}[style=MyFrame2]
\begin{claim}\label{claim:1}
Let $\Gm\ly{\hg}$ represent $\Hg$-action, and let $\Wm_\Hg \in \Reals^{P \times P}$ be an equivariant linear map for this action.
Similarly, let $\Wm_\Kg: \Reals^{Q \times Q}$ be equivariant to $\Kg$-action given by $\Gm\ly{\kg}$ for $\kg \in \Kg$. Then, the product group $\Gg = \Hg \times \Kg$ naturally acts on $\Reals^N = \Reals^{P Q}$ using $\Gm\ly{\gg} = \Gm\ly{\hg} \otimes \Gm\ly{\kg}$, and the Kronecker product $\Wm_{\Gg} = \W_\Hg \otimes \Wm_\Kg$, is a $\Gg$-equivariant linear map.
\end{claim}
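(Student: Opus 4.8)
The plan is to verify the commutativity condition $\Gm\ly{\gg} \Wm_{\Gg} = \Wm_{\Gg} \Gm\ly{\gg}$ for all $\gg \in \Gg$ directly, since by the discussion in \cref{sec:preliminaries} this is precisely what equivariance of the linear map $\Wm_{\Gg}$ requires. The only tool needed is the mixed-product property of the Kronecker product, namely $(\mat{A} \otimes \mat{B})(\mat{C} \otimes \mat{D}) = (\mat{A}\mat{C}) \otimes (\mat{B}\mat{D})$ whenever the factor dimensions are compatible.

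First I would expand the left-hand side as $\Gm\ly{\gg} \Wm_{\Gg} = (\Gm\ly{\hg} \otimes \Gm\ly{\kg})(\Wm_\Hg \otimes \Wm_\Kg)$ and apply the mixed-product property to rewrite it as $(\Gm\ly{\hg} \Wm_\Hg) \otimes (\Gm\ly{\kg} \Wm_\Kg)$. Symmetrically, the right-hand side $\Wm_{\Gg} \Gm\ly{\gg}$ becomes $(\Wm_\Hg \Gm\ly{\hg}) \otimes (\Wm_\Kg \Gm\ly{\kg})$.

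Next I would invoke the hypotheses: $\Wm_\Hg$ is equivariant to the $\Hg$-action, so $\Gm\ly{\hg} \Wm_\Hg = \Wm_\Hg \Gm\ly{\hg}$, and likewise $\Gm\ly{\kg} \Wm_\Kg = \Wm_\Kg \Gm\ly{\kg}$. Substituting these two factorwise identities into the expanded left-hand side immediately produces the expanded right-hand side, so the two agree for every $\gg = (\hg,\kg) \in \Gg$, which establishes the claim.

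I expect essentially no obstacle, as the argument is a single application of the mixed-product property followed by the per-block equivariance assumptions. The one point deserving a line of justification is that $\gg \mapsto \Gm\ly{\hg} \otimes \Gm\ly{\kg}$ is genuinely a representation of the direct product action asserted in the statement; this again follows from the mixed-product property, since $(\Gm\ly{\hg} \otimes \Gm\ly{\kg})(\Gm\ly{\hg'} \otimes \Gm\ly{\kg'}) = (\Gm\ly{\hg}\Gm\ly{\hg'}) \otimes (\Gm\ly{\kg}\Gm\ly{\kg'})$ matches the group law $(\hg,\kg)(\hg',\kg') = (\hg\hg',\kg\kg')$. Note also that this claim only asserts membership (one direction): the Kronecker product yields \emph{an} equivariant map, not a basis for all of them, so no converse or dimension-counting argument is required here.
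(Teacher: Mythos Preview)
Your proposal is correct and follows exactly the paper's approach: a one-line application of the mixed-product property of the Kronecker product together with the factorwise equivariance of $\Wm_\Hg$ and $\Wm_\Kg$. Your added remarks (that $\gg \mapsto \Gm\ly{\hg} \otimes \Gm\ly{\kg}$ is a genuine representation, and that only the forward direction is claimed) are sound and slightly more explicit than the paper itself.
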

\end{mdframed}
Note that the claim is not restricted to permutation action.\footnote{The tensor product of irreducible representation of two distinct finite groups is an irreducible representation for the product group. Therefore this construction of equivariant maps can be used with a decomposition into irreducible representations for the general linear case. }
The proof follows from the \emph{mixed-product property} of the Kronecker product\footnote{$(\mat{A} \otimes \mat{B}) (\mat{C} \otimes \mat{D}) = (\mat{A}  \mat{C}) \otimes (\mat{B} \mat{D})$}, and the equivariance of 
$\Wm_\Hg$ and $\Wm_\Kg$:
\begin{align*}
    \Gm\ly{\gg} \Wm_{\Gg} &= (\Gm\ly{\hg} \otimes \Gm\ly{\kg}) (\W_\Hg \otimes \Wm_\Kg)  
    = (\Gm\ly{\hg} \W_\Hg) \otimes (\Gm\ly{\kg} \W_\Kg) \\
    &= ( \W_\Hg \Gm\ly{\hg}) \otimes (\W_\Kg \Gm\ly{\kg}) = (\W_\Hg \otimes \Wm_\Kg)  (\Gm\ly{\hg} \otimes \Gm\ly{\kg}) =  \Wm_{\Gg} \Gm\ly{\gg} \quad \forall \gg \in \Gg.
\end{align*}

An implication of the tensor product form of $\Wm_{\Hg \times \Kg}$ is that the number of independent linear operators of the product map (free parameters in the parameter-sharing) is the product of the independent operators of the building blocks. Note that whenever dealing with product of inputs belonging to $\Reals^P$ and $\Reals^Q$, the result which belongs to $\Reals^{PQ}$ and therefore it is vectorized.

\begin{example}[Convolution in Higher Dimensions]
D-dimensional convolution is a Kronecker (tensor) product of one-dimensional convolutions. The number of parameters grows with the product of kernel width across all dimensions;  \cref{fig:param_sharing}(a.1) shows the parameter-sharing for circular 1D convolution $\Wm_{\Cg_4}$, and (b.1) shows the parameter-sharing for the direct product $\Wm_{\Cg_3 \times \Cg_4}$. 
\end{example}
\begin{example}[Exchangeable Tensors]
\citet{hartford2018deep} introduce a layer for modeling interactions across multiple sets of entities, \eg a user-movie rating matrix.
Their model can be derived as the Kronecker product of 2-parameter equivariant layer for sets~\cite{zaheer2017deep}.
The number of parameters is therefore $2^D$ for a rank $D$ exchangeable tensor. \cref{fig:param_sharing}(b.1) shows the 2-parameter model $\Wm_{\S_4}$ for sets, and (c.1) shows the parameter-sharing for the direct product $\Wm_{\Sg_3 \times \Sg_4}$.
\end{example}

\subsection{Wreath Product Action and Equivariance to a Hierarchy of Symmetries}\label{sec:wreath}
Let us start with an informal definition. Suppose as before $\Ps$ and $\Qs$ are respectively an $\Hg$-set and a $\Kg$-set. We can attach one copy of $\Qs$ to each element of $\Ps$. Each of these inner sets or fibers have their own copy of $\Kg$ acting on them.
Action of $\Hg$ on $\Ps$ simply permutes these fibers. Therefore the combination of all $\Kg$ actions on all inner sets combined with $\Hg$-action on the outer set defines the action of the wreath product on $\Ps \times \Qs$. \cref{fig:wreath} demonstrates how one point $(p,q)$ moves under this action.
Next few paragraphs formalize this.

\paragraph{Semidirect Product.} Formally, wreath product is defined using \emph{semidirect} product which is a generalization of direct product.
In part due to its use in building networks equivariant to Euclidean isometries, application of semidirect product in building equivariant networks is explored in several recent works; see \cite{cohen2019general} and citations therein.
In semidirect product, the underlying set (of group members) is again the product set. However the group operation is more involved. 
The (external) semi-direct product $\Gg = \Kg \rtimes_\gamma \Hg$, requires a \emph{homomorphism}
$\gamma: \Hg \to \mathrm{Aut}(\Kg)$ that for each choice of $\hg \in \Hg$,  re-labels the elements of $\Kg$ while preserving its group structure. Using $\gamma$, the binary operation for the product group $\Gg = \Kg \rtimes_\gamma \Hg$ is defined as 
$(\hg, \kg) (\hg', \kg') = (\hg \hg', \kg \gamma_\hg(\kg'))$. A canonical example is the semidirect product of translations ($\Kg$) and rotations ($\Hg$), which identifies the group of all rigid motions in the Euclidean space. Here, each rotation defines an automorphism of translations (\eg moving north becomes moving east after $90^\circ$ clockwise rotation).



\begin{wrapfigure}{r}{0.35\textwidth}
\vspace*{-3em}
    \includegraphics[width=0.35\textwidth]{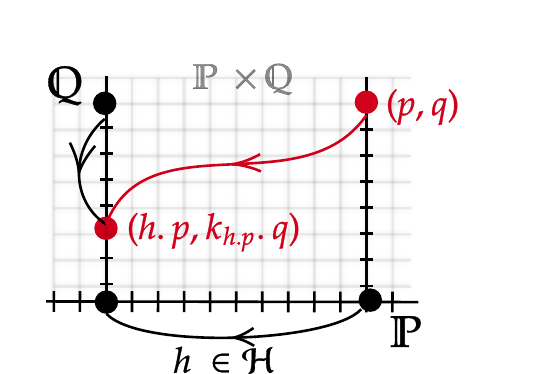}
    \caption{\footnotesize{Imprimitive action of wreath product.}}\label{fig:wreath}
    \vspace*{-1em}
\end{wrapfigure}
Now we are ready to define the wreath product of two groups. As before, let $\Hg$ and $\Kg$ denote two finite groups, and let $\Ps$ be an $\Hg$-set. Define $\Bg$ as the direct product of $P = |\Ps|$ copies of $\Kg$, and index these copies by  $p \in \Ps$:
$\Bg = \Kg_1 \times \ldots \times \Kg_p \times \ldots \times \Kg_{P}$. Each member of this group is a tuple  $\bg = (\kg_1,\ldots,\kg_p, \ldots \kg_{P})$. Since $\Ps$ is an $\Hg$-set, $\Hg$ also naturally acts on $\Bg$ by permuting the \emph{fibers} $\Kg_p$. The semidirect product $\Bg \rtimes \Hg$ defined using this automorphism of $\Bg$ is called the wreath product, and written as $\Kg \wr \Hg$.
Each member of the product group can be identified by the pair $(\hg, \bg)$, where $\bg$ as a member of the 
\emph{base} group itself is a $P$-tuple. This shows that the order of the wreath product group is 
$|\Kg|^{P} |\Hg|$ which can be much larger than the direct product group $\Kg \times \Hg$, whose order is $|\Kg| |\Hg|$.

\begin{figure}
\begin{tabular}{clrcccc}
\multicolumn{2}{c}{equivariant blocks} & \multicolumn{5}{c}{equivariant maps for product ($\times$) and hierarchical ($\wr$) structures} \\[3pt]
 \includegraphics[width=.15\linewidth]{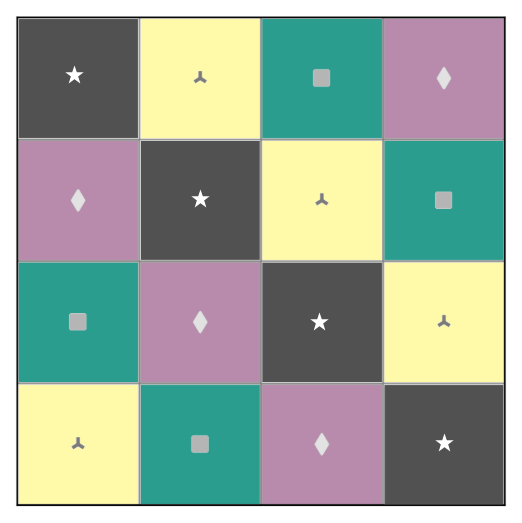} &  & \parbox[t]{1mm}{\rotatebox{90}{$\times$ direct prod.}}& \includegraphics[width=.15\linewidth]{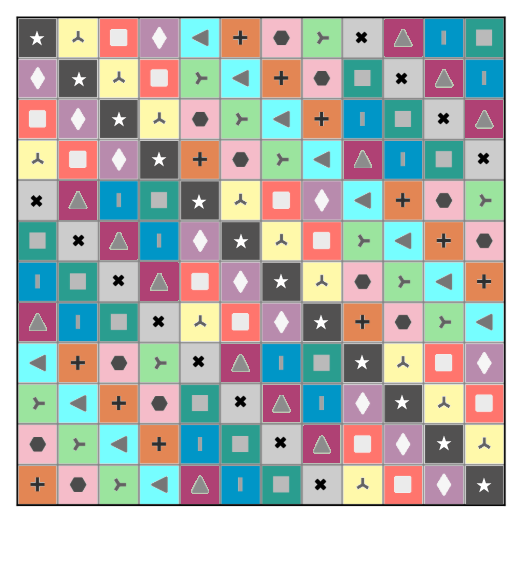} &   \includegraphics[width=.15\linewidth]{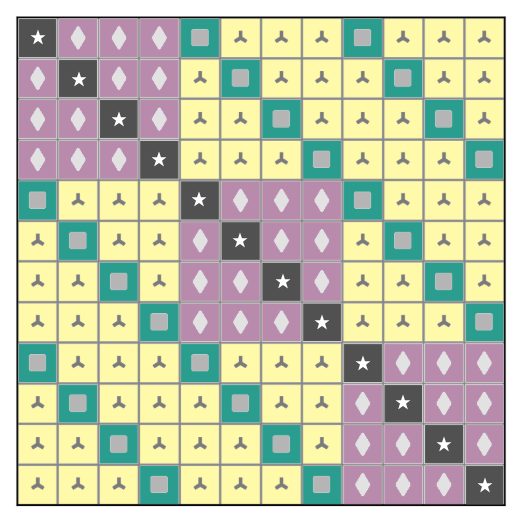} & \includegraphics[width=.15\linewidth]{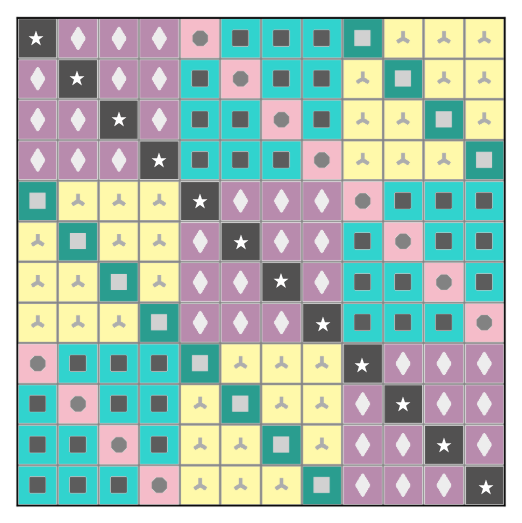} & \includegraphics[width=.15\linewidth]{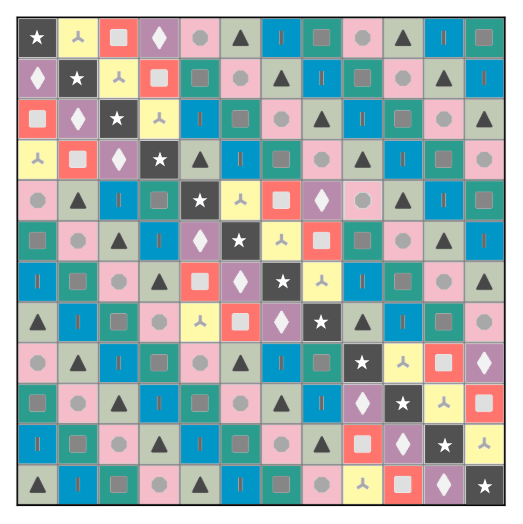} \\
 {(\textbf{a.1})} $\Cg_4$ & & & (\textbf{b.1}) $\Cg_3 \times \Cg_4$  & (\textbf{c.1}) $\Sg_3 \times \Sg_4$ & (\textbf{d.1}) $\Cg_3 \times  \Sg_4$ & 
 (\textbf{e.1}) $\Sg_3 \times \Cg_4$  \\[3pt]
  \includegraphics[width=.15\linewidth]{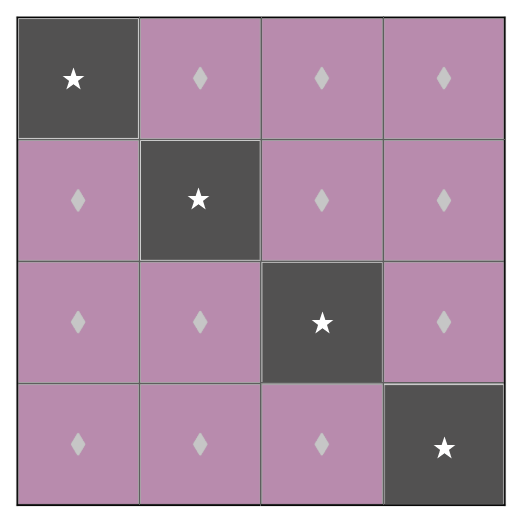} & & \parbox[t]{1mm}{\rotatebox{90}{$\wr$ wreath prod.}}& \includegraphics[width=.15\linewidth]{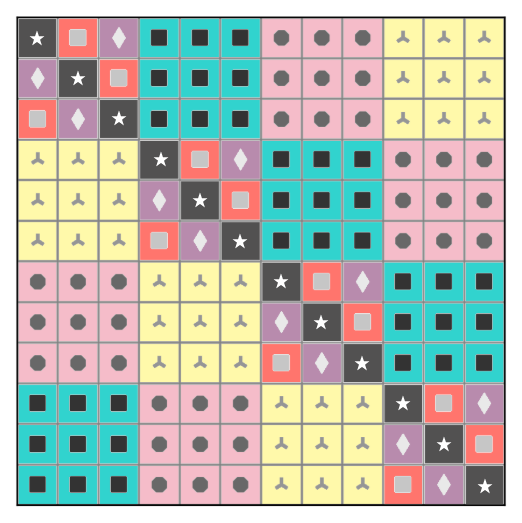} & \includegraphics[width=.15\linewidth]{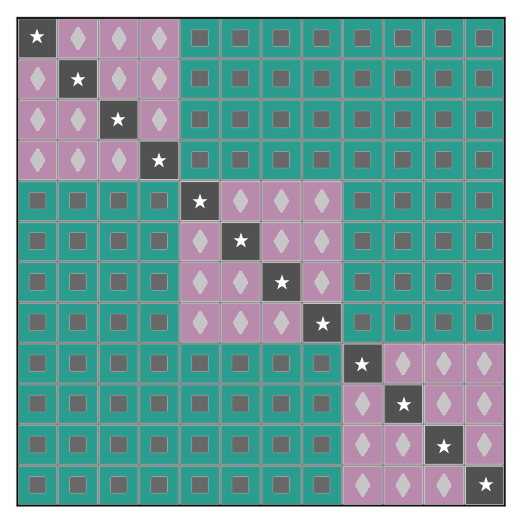} & \includegraphics[width=.15\linewidth]{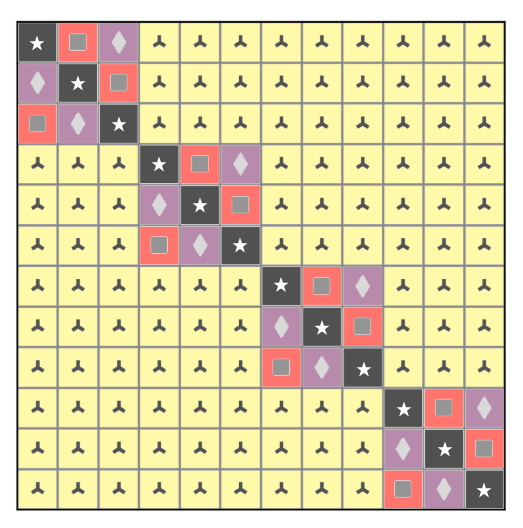} & \includegraphics[width=.15\linewidth]{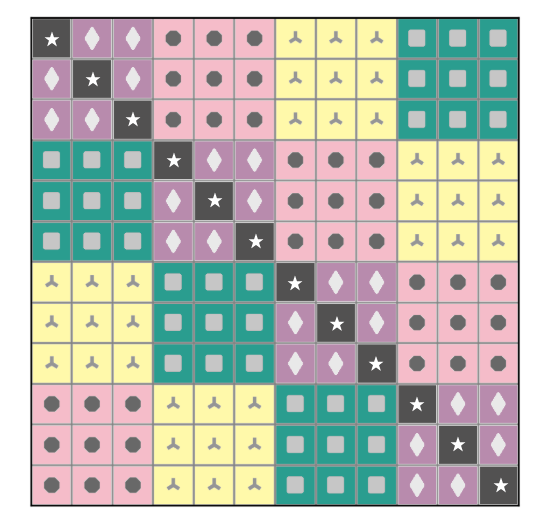} \\
 (\textbf{a.2}) $\Sg_4$ & & & (\textbf{b.2}) $\Cg_3 \wr \Cg_4$ & (\textbf{c.2}) $\Sg_4 \wr \Sg_3$ & (\textbf{d.2}) $\Cg_3 \wr \Sg_4$ & (\textbf{e.2}) $\Sg_3 \wr \Cg_4$  \\[3pt]
\end{tabular}
\caption{\footnotesize{Parameter-sharing patter in the equivariant linear maps for compositional structures. Equivariant maps for sequence and set are given in the first column (\textbf{a.1, a.2}). The first row shows the maps equivariant to various \emph{products} of sets and sequences, while the second row is the corresponding map for \emph{nested structure}.
\textbf{Product Structures:} (\textbf{b.1}) product of sequences (as in image); (\textbf{c.1}) product of sets (exchangeable matrices); (\textbf{d.1}) product of a set and a sequence; (\textbf{e.1}) product of a sequence and a set. In the notation, the outer structure appears first in $\Hg \times \Kg$.
\textbf{Hierarchical Structures:} (\textbf{b.2}) sequence of sequences (\eg multi-resolution sequence model); (\textbf{c.2}) nested set of sets; (\textbf{d.2}) set of sequences; (\textbf{e.2}) sequences of sets (similar to the model used in the application section.) In wreath product notation, the outer structure appears \emph{second} in $\Kg \wr \Hg$.
Note how the equivariant map for the hierarchy is a scaled version of the map for the outer structure with copies of the inner structure maps appearing as diagonal blocks.
}}
\label{fig:param_sharing}
\end{figure}

\subsubsection{Imprimitive Action of Wreath Product}
If in addition to $\Ps$ being an $\Hg$-set, $\Kg$ action on $\Qs$ is also defined, the wreath product group acts on $\Ps \times \Qs$ (making it comparable to the direct product). 
Specifically, $(\hg, \kg_1,\ldots, \kg_{P}) \in \Kg \wr \Hg$ acts on $(p,q) \in \Ps \times \Qs$ as follows:
$$
(\hg, \kg_1,\ldots, \kg_{P}) \cdot (p,q) \defeq (\hg \cdot p, \kg_{\hg \cdot p} \cdot q).
$$
Intuitively, $\Hg$ permutes the copies of $\Kg$ acting on each $\Qs$, and itself acts on $\Ps$. We can think of $\Ps$ as the outer structure and $\Qs$ as the inner structure; see \cref{fig:wreath}. 

\begin{example}[Sequence of Sequences]
Consider our early example where both $\Hg = \gr{C}_P$ and $\Kg = \gr{C}_Q$ are cyclic groups with regular action on $\Ps \cong \Hg$, $\Qs \cong \Kg$. Each member of the wreath product $\gr{C}_Q \wr \gr{C}_P$, acts by translating each inner sequence using some $\gr{c} \in \gr{C}_Q$, while $\gr{c}' \in \gr{C}_P$ translates the outer sequence by $\gr{c}'$.
\end{example}
Let the $P \times P$ permutation matrix $\Gm\ly{\hg}$ represent the action of $\hg \in \Hg$, and the 
$Q \times Q$ matrices $\Gm\ly{\kg_1}, \ldots, \Gm\ly{\kg_P}$ represent the action of $\kg_p$ on $ \Qs$. Then the action of $\gg \in \Kg \wr \Hg$ on (the vectorized) $\Ps \times \Qs$ is the following $P Q \times P Q$ permutation matrix:
\begin{align}\label{eq:wreath_perm_matrix}
\Gm\ly{\gg}  = 
     \begin{bmatrix}
    \Gm\ly{\hg}_{1,1} \Gm\ly{\kg_1}, &\ldots, & \Gm\ly{\hg}_{1,P}  \Gm\ly{\kg_1}\\
    \vdots & \ddots & \vdots \\
    \Gm\ly{\hg}_{P,1} \Gm\ly{\kg_P}, &\ldots, & \Gm\ly{\hg}_{P,P}  \Gm\ly{\kg_P}\\
    \end{bmatrix}
    =  \sum_{p=1}^P \mat{1}_{p,\hg \cdot p} \otimes \Gm\ly{\kg_p}
\end{align}
\begin{wrapfigure}{r}{0.5\textwidth}
\vspace*{-1em}
    \includegraphics[width=0.5\textwidth]{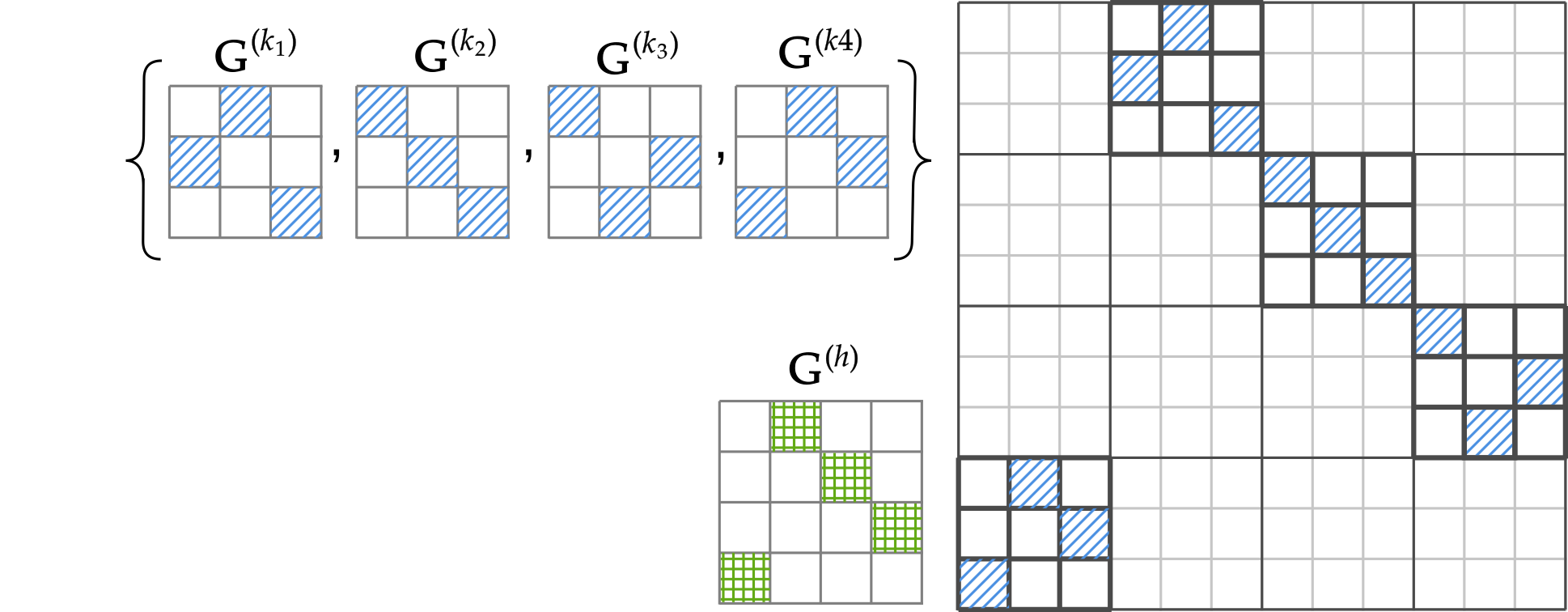}
    \caption{\footnotesize{Permutation for the imprimitive wreath product ($\Gm^{(\gg)}$) on the right, is built from permutations of the outer structure using $\Gm^{(\hg)}$ and independent permutations of inner structures $\Gm^{(\kg_1)}, \ldots, \Gm^{(\kg_4)}$.}}\label{fig:permutation}
   \vspace*{-2em}
\end{wrapfigure}
where $\mat{1}_{p,\hg \cdot p}$ is a $P \times P$ matrix whose only nonzero element is at row $p$ and column
$\hg \cdot p$ with the value of 1; and $\Gm_{p,p'}\ly{\hg}$ is the element at row $p$ and column $p'$ of the permutation matrix $\Gm\ly{\hg}$. 
In the summation formulation, the Kronecker product $\mat{1}_{p,\hg \cdot p} \otimes
\Gm\ly{\kg_p}$ puts a copy of permutation matrix $\Gm\ly{\kg_p}$ as a block of $\Gm\ly{\gg}$. Note that the resulting permutation matrix is
different from the Kronecker product;  in this case we have $P+1$ (permutation) matrices participating in creating $\Gm\ly{\gg}$, compared with two matrices in the vanilla Kronecker product; see \cref{fig:permutation}.

\subsubsection{Equivariant Map} Consider a hierarchical structure, potentially with more than two levels of hierarchy, such as a set of sequences of images. Moreover, suppose that we have an equivariant map for each individual structure. 
The question answered by the following theorem is: how to use the equivariant map for each level to construct the equivariant map for the entire hierarchy? For now we only consider two levels of hierarchy; extension to more levels  follows naturally, and is discussed later. Note that one instance of the hierarchy (\eg a sequence of sets) belongs to $\Reals^{PQ}$, because each instance of the structure is in $\Reals^Q$ and the outer structure is of size $P$ -- that is vectorization of the input follows the hierarchy.
\begin{mdframed}[style=MyFrame2]
\begin{theorem}\label{th:main}
Let $\Wm_{\Kg \wr \Hg} \in \Reals^{PQ \times PQ}$ be the matrix of some linear map equivariant to the imprimitive action of $\Kg \wr \Hg$ on $\Ps \times \Qs$. Any such map can be written as follows
\begin{align}\label{eq:wreath_prod_w}
    \Wm_{\Kg \wr \Hg} =  \Wm_\Hg \otimes (\vec{1}_Q \vec{1}^\top_Q) + \mat{I}_P \otimes \Wm_\Kg, \quad \text{where} \quad 
    \vec{1}_Q = \overbrace{[1,\ldots,1]^\top}^{Q\; \text{times}},
\end{align}
 $\mat{I}_P$ is the $P\times P$ identity matrix, $\Wm_\Hg \in \Reals^{P \times P}$ is $\Hg$-equivariant, and $\Wm_\Kg \in \Reals^{Q \times Q}$ is equivariant to $\Kg$-action.
\end{theorem}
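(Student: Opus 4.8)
The plan is to solve the commutativity constraint $\Gm\ly{\gg}\Wm = \Wm\Gm\ly{\gg}$ for all $\gg \in \Kg \wr \Hg$ (writing $\Wm \defeq \Wm_{\Kg\wr\Hg}$) by exploiting the semidirect structure $\Kg\wr\Hg = \Bg \rtimes \Hg$. The key reduction is that if $\Wm$ commutes with $\Gm\ly{\gg_1}$ and $\Gm\ly{\gg_2}$ then it commutes with $\Gm\ly{\gg_1\gg_2}$, so it suffices to impose the constraint on a generating set; and $\Kg\wr\Hg$ is generated by the base group $\Bg = \Kg_1\times\cdots\times\Kg_P$ (elements $(\gr{e},\kg_1,\ldots,\kg_P)$) together with the diagonal copy of $\Hg$ (elements $(\hg,\gr{e},\ldots,\gr{e})$), since every element factors as a base-group element times a diagonal-$\Hg$ element. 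Throughout I view $\Wm$ as a $P\times P$ array of $Q\times Q$ blocks $\Wm_{p,p'}$, matching the vectorization of $\Ps\times\Qs$, and I assume the building-block actions are transitive (regular cyclic action for sequences, natural action for sets --- the cases of interest), which is exactly what makes the all-ones pattern $\vec{1}_Q\vec{1}_Q^\top$ appear.

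First I would impose the base group. For $(\gr{e},\kg_1,\ldots,\kg_P)$ the permutation in \cref{eq:wreath_perm_matrix} is block-diagonal, $\mathrm{diag}(\Gm\ly{\kg_1},\ldots,\Gm\ly{\kg_P})$, so the constraint decouples block-wise into $\Gm\ly{\kg_p}\Wm_{p,p'} = \Wm_{p,p'}\Gm\ly{\kg_{p'}}$ for all independent choices $\kg_p,\kg_{p'}\in\Kg$. On the diagonal ($p=p'$) this reads $\Gm\ly{\kg}\Wm_{p,p} = \Wm_{p,p}\Gm\ly{\kg}$, so every diagonal block is a $\Kg$-equivariant map. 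Off the diagonal ($p\neq p'$), setting $\kg_{p'}=\gr{e}$ forces every column of $\Wm_{p,p'}$ to be fixed by all $\Gm\ly{\kg_p}$, and setting $\kg_p=\gr{e}$ forces every row to be fixed by all $\Gm\ly{\kg_{p'}}$; by transitivity the only such vectors are multiples of $\vec{1}_Q$, hence $\Wm_{p,p'} = c_{p,p'}\,\vec{1}_Q\vec{1}_Q^\top$ for a scalar $c_{p,p'}$.

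Next I would impose the diagonal copy of $\Hg$. For $(\hg,\gr{e},\ldots,\gr{e})$ the permutation reduces to $\Gm\ly{\hg}\otimes\mat{I}_Q$, and conjugation by it merely relabels block indices, identifying the block at position $(p,p')$ with the block at $(\hg\cdot p,\hg\cdot p')$. Invariance of $\Wm$ under this for all $\hg$ therefore says the block pattern is itself $\Hg$-equivariant: the diagonal blocks satisfy $\Wm_{\hg\cdot p,\hg\cdot p} = \Wm_{p,p}$, so by transitivity of $\Hg$ on $\Ps$ they are all equal to a single $\Kg$-equivariant map $\Wm_\Kg$; and the scalars satisfy $c_{\hg\cdot p,\hg\cdot p'} = c_{p,p'}$, i.e. the $P\times P$ matrix $C=(c_{p,p'})$ commutes with every $\Gm\ly{\hg}$ and is thus $\Hg$-equivariant. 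Assembling gives $\Wm = C\otimes(\vec{1}_Q\vec{1}_Q^\top) + \mat{I}_P\otimes\Wm_\Kg$, which is precisely the form of \cref{eq:wreath_prod_w} with $\Wm_\Hg = C$. The converse (sufficiency) is a short computation: using $\Gm\ly{\kg_p}\vec{1}_Q = \vec{1}_Q$, the identity $\sum_p \mat{1}_{p,\hg\cdot p} = \Gm\ly{\hg}$, and the equivariance of $\Wm_\Hg,\Wm_\Kg$, one checks directly that any $\Wm$ of this form commutes with every $\Gm\ly{\gg}$; combined with necessity this shows the parametrization is exactly the equivariant space.

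The step I expect to require the most care is the off-diagonal characterization and the transitivity hypotheses underpinning it. The clean conclusion $\Wm_{p,p'} = c_{p,p'}\vec{1}_Q\vec{1}_Q^\top$ hinges on $\Kg$ acting transitively on $\Qs$ (otherwise the doubly-invariant blocks are only orbit-wise constant and the pattern is not a pure all-ones matrix), and collapsing all diagonal blocks to one $\Wm_\Kg$ uses transitivity of $\Hg$ on $\Ps$; these are the assumptions I must make explicit. A secondary subtlety is that the representation is not unique: since $\vec{1}_Q\vec{1}_Q^\top$ is itself $\Kg$-equivariant, any multiple of it may be shifted between $\mat{I}_P\otimes\Wm_\Kg$ and the diagonal of $\Wm_\Hg\otimes(\vec{1}_Q\vec{1}_Q^\top)$, so I will simply exhibit one valid choice (e.g. $\Wm_\Hg=C$) rather than claim uniqueness. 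As a consistency check, the free-parameter count --- independent operators of $\Wm_\Hg$ plus those of $\Wm_\Kg$ --- should match the number of orbits of $\Kg\wr\Hg$ on $(\Ps\times\Qs)^2$, which by the standard parameter-sharing result equals the dimension of the equivariant space.
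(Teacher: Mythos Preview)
Your proposal is correct and takes a genuinely different route from the paper's proof. The paper splits the argument into (i) a direct verification that any map of the form \cref{eq:wreath_prod_w} commutes with the wreath-product permutation matrices of \cref{eq:wreath_perm_matrix} (essentially your ``converse'' sketch), and (ii) a dimension-counting argument for maximality: using Burnside's lemma, it computes the number of orbits of $\Kg\wr\Hg$ acting diagonally on $(\Ps\times\Qs)^2$ and shows it equals $H+K-1$ (where $H,K$ are the numbers of independent $\Hg$- and $\Kg$-equivariant operators), then observes that the parametrized family also spans a space of dimension exactly $H+K-1$ (the single shared basis being $\mat{I}_P\otimes(\vec{1}_Q\vec{1}_Q^\top)$), forcing the two subspaces to coincide. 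You instead solve the commutant equations directly by reducing to generators --- the base group and the diagonal copy of $\Hg$ --- and reading off the block structure. Both arguments rely on transitivity of $\Hg$ on $\Ps$ and of $\Kg$ on $\Qs$: the paper invokes it explicitly inside the Burnside computation, while you use it to force off-diagonal blocks to be scalar multiples of $\vec{1}_Q\vec{1}_Q^\top$ and to collapse all diagonal blocks to a single $\Wm_\Kg$. Your approach is more constructive and sidesteps the orbit count; the paper's approach has the advantage that the parameter count $H+K-1$ (and hence the one-dimensional redundancy you correctly flag) falls out of the proof itself rather than requiring a separate consistency check.
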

\end{mdframed}
Proof is in \ifbool{APX}{\cref{app:proof}}{Appendix A}.
Pictorially, the first term of \cref{eq:wreath_prod_w} scales up the matrix $\Wm_\Hg$, while the second term adds copies of $\Wm_\Kg$ on the diagonal blocks of the scaled matrix.
 Assuming \emph{sets} of independent equivariant linear operators $\{\Wm_\Kg\}$ and $\{\Wm_\Hg\}$, from \cref{eq:wreath_prod_w} it is evident that the \emph{number of independent linear operators} equivariant to $\Kg \wr \Hg$ grows with the sum of those of the building blocks.\footnote{In the proof we show that the number of parameters is the sum of those of the building blocks minus one.} These operators may be combined using any parameter to create a parameterized linear map, which in turn can be expressed using parameter-sharing in a vanilla feed-forward layer. This in contrast with direct product in which the number of free parameters has a product form.
\begin{example}[Various Hierarchies of Sets and Sequences]
Consider two of the most widely used equivariant maps, translation equivariant convolution $\Wm_{\Cg_P}$, and 
$\Sg_\Qs$-equivariant map which has the form $\Wm_{\Sg_Q} = w_1 \mat{I}_Q + w_2$~\cite{zaheer2017deep}.
There are four combinations of these structures in a two level hierarchical structure: 1) set of sets $\Sg_Q \wr \Sg_P$; 2) sequence of sequences $\Cg_Q \wr \Cg_P$; 3) set of sequences $\Cg_Q \wr \Sg_P$; 4) sequence of sets $\Sg_Q \wr \Cg_P$. \cref{fig:param_sharing}(b.2-e.2) show the parameter-sharing matrix for these hierarchical structures, assuming a full kernel in 1D circular convolution. 

The reader is invited to contrast the three parameter layer for set of sets, with the four parameter layer for interactions across sets in \cref{fig:param_sharing}(c.1,c.2). 
Similarly, a model for sequence of sequences has far fewer parameters than a model for an image, as seen in \cref{fig:param_sharing}(b.1, b.2).
\end{example}

\subsubsection{Deeper Hierarchies and Combinations with Direct Product}
With more than two levels, the symmetry group involves more than one wreath product, which means that the equivariant map for the hierarchy is given by a recursive application of \cref{th:main}. For example, the equivariant map for  $(\Kg \wr \Hg) \wr \Ug$, in which $\Ug$ acts on some set $\set{R}$, and $ \Wm_\Ug$ is the corresponding equivariant map, is given by
$\Wm_{(\Kg \wr \Hg) \wr \Ug } =  \Wm_\Ug \otimes (\vec{1}_{PQ} \vec{1}^\top_{PQ}) + \mat{I}_R \otimes \Wm_{\Kg \wr \Hg }$, where $\Wm_{\Kg \wr \Hg }$ is in turn given by \cref{eq:wreath_prod_w}. Note that wreath product is associative, and so the iterative construction above leads the same equivariant map as $\Wm_{\Kg \wr (\Hg \wr \Ug)}$.

We can also mix and match this construction with that of direct product in \cref{claim:1}; for example, to produce the map for exchangeable tensors (product of sets), where each interaction is in the form of an image (hierarchy) -- \ie the group is $(\Cg_R \times \Cg_V) \wr (\Sg_P \times \Sg_Q)$. 

\subsection{Efficient Implementation}
With equivariant maps for direct product of groups, efficient implementation of $\Wm_{\Kg \times \Hg}$
using efficient implementation for individual blocks $\Wm_{\Kg}$, and $\Wm_{\Hg}$ is non-trivial -- \eg consider 2D convolution. In contrast, it is possible to use a black-box implementation of the parameter-sharing layers for $\Wm_{\Kg}$, and $\Wm_{\Hg}$ to construct $\Wm_{\Kg\wr\Hg}$. To this end, let $\xv \in \Reals^{PQ}$ be the input signal, and
$\operatorname{mat}(\xv) \in \Reals^{P \times Q}$ denote its matrix form; for example in a set of sets, each column is an inner set in this matrix form. Throughout, we are assuming a single input-output channel, relying on the idea that having multiple channels simply corresponds to replicating the linear map for each input-output channel combination. Then we can rewrite \cref{eq:wreath_prod_w} as
\begin{align}\label{eq:efficient}
    \Wm_{\Kg \wr \Hg} \xv =  \operatorname{vec} 
    \left ( 
    \left ( \Wm_\Hg \left ( \operatorname{mat}(\xv)\vec{1}_Q \right ) \right ) \vec{1}^\top_Q + \operatorname{mat}(\xv) \Wm_\Kg 
    \right ),
\end{align}
where the first multiplication $\operatorname{mat}(\xv)\vec{1}_Q$ pools over columns (inner structures), and after application of the $\Hg$-equivariant map $\Wm_\Hg$ to the pooled value, the result is broadcasted back using right-multiplication by $\vec{1}^\top_Q$. The second term simply transforms each inner structure using $\Wm_\Kg$.
The overall operation turns out to be simple and intuitive: {the inner equivariant map is applied to individual inner structures, and the outer equivariant map is applied to pooled values and broadcasted back.}

\begin{example}[Equivariant Map for Multiresolution Image]
Consider a coarse pixelization of an image into small patches.
\cref{eq:efficient} gives the following recipe for a layer equivariant to independent translations within each patch as well as global translation of 
the coarse image: 1) apply convolution to each patch independently; 2) pool over each patch, apply convolution to the coarse image, and broadcast back to individual pixels in each patch. 3) add the contribution of these two operations. 
Notice how pooling over regions, a widely used operation in image processing, appears naturally in this approach. One could also easily extend this to more levels of hierarchy for larger images.
\end{example}

\begin{wrapfigure}{r}{0.4\textwidth}
\vspace*{-4.5em}
    \includegraphics[width=0.4\textwidth]{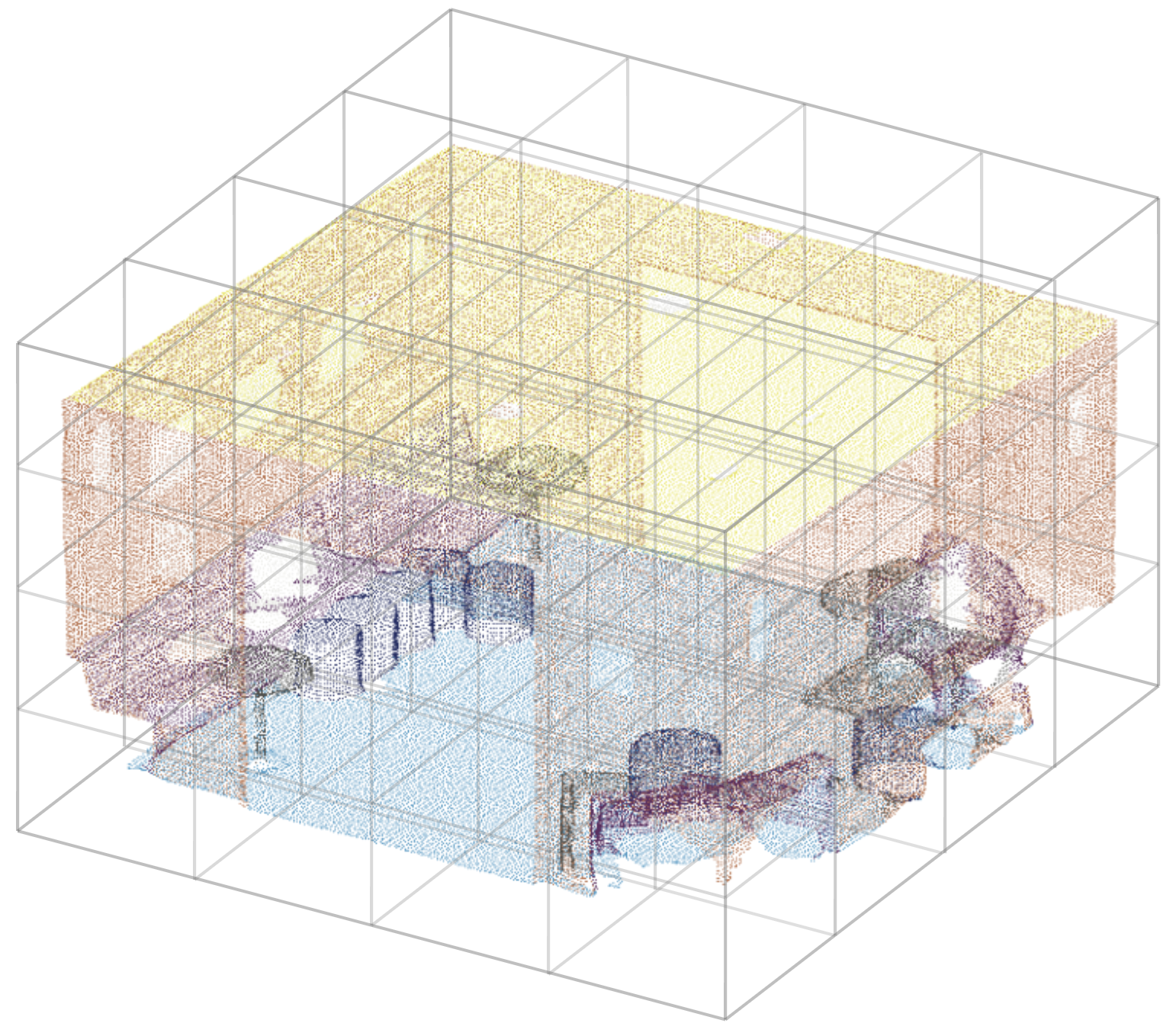}
    \caption{\footnotesize{We impose a hierarchy of translation and permutation symmetry by voxelizing the point-cloud.}}\label{fig:voxelization}
    \vspace*{-1em}
\end{wrapfigure}
\section{Application: Point-Cloud Segmentation}
\label{sec:application}
In this section we consider a simple application of the theory above to
large-scale point-cloud segmentation. The layer is the 3D version of equivariant linear map for sequence of sets, which is visualized in \cref{fig:param_sharing}(d.2) -- that is we combine translation and permutation symmetry by voxelizing the point-cloud; see \cref{fig:voxelization}. The points within one voxel are exchangeable, while the voxels themselves can be translated.
Using a hierarchical structure is beneficial compared to both the set model and 3D convolution. In particular, the set model lacks any prior of the 
Euclidean nature of the data, while the 3D convolution, in order to preserve resolution requires a fine-grained voxelization where each point appears in one voxel. The reader may ask: ``how is the wreath product used to characterize the setting when the number of points is changing per voxel?'' While our derivation using wreath product assumes a fixed number of points, since the resulting model for the set is independent of the number of data points, we can operate on various points per voxel. This is the same reason why we can apply the same convolution filter to images of different sizes.

The past few years have seen a growing body of work on learning with point cloud data; see \cite{guo2019deep} for a survey. 
Many methods use hierarchical aggregation and pooling; this includes the 
use of furthest point clustering for pooling in \kw{PoinNet++}\cite{qi2017pointnet++}, use of concentric spheres for pooling in \kw{ShellNet}, or KD-tree guided pooling in \cite{klokov2017escape}. Several works extend convolution operation to maintain both translation and permutation equivariance in one way or another \cite{boulch2019generalizing,wang2018deep,atzmon2018point}; see also \cite{graham2017submanifold,choy20194d}.
Here, objective is not to introduce a radically new procedure, but to show the effectiveness of the approach discussed in previous sections in deep model design from first principles. Indeed, we are able to achieve state-of-the-art in several benchmarks for large point-clouds.


\subsection{Equivariance to a Hierarchy of Translations and Permutations}
Let $\Xm \in \Reals^{N \times C}$ be the input point cloud, where $N$ is the number of points and $C$ is the number of input channels (for concreteness, in this section are including the channel dimension in our formulae.) In addition to 3D coordinates, these channels may include RGB values, normal vectors or any other auxiliary information.
Consider a voxelization of the point cloud with a resolution $D$ voxels per dimension, and consider the hierarchy of translation symmetry across voxels and permutation symmetry within each voxel. We may also replace 3D coordinates with relative coordinates within each voxel.
Let $\Pi \in \{0,1\}^{D^3 \times N}$ with one non-zero per column identify the voxel membership. 
Then the combination of equivariant set layer $\Xm \Wm_{1} + \vec{1}  \vec{1}^\top \Xm \Wm_2$~\cite{zaheer2017deep} with 3D convolution using the pool-broadcast interpretation given in \cref{eq:efficient}, results in the following wreath-product equivariant linear layer
\begin{align} \label{eq:conv}
     \phi(\Xm) &= \Xm \Wm_1 + \Pi^\top  \big ( \Wm_3 \ast (\Pi \Xm) \big), 
\end{align}
where $\Wm_1 \in \Reals^{C \times C'}$, $\ast$ denotes the convolution operation, $\W_3 \in \Reals^{K^3 \times C \times C' } $ is the convolution kernel, with kernel width $K$, and $C'$ output channels. 
Here, multiplication with $\Pi$ and $\Pi^\top$ performs the pooling and broadcasting from/to points within voxels, respectively. Note that we have dropped the ``set'' operation $\Pi^\top ((\Pi \Xm) {\Wm_2})$ that pools over each voxel, multiplies by the weight and broadcasts back to the points. This is because it can be absorbed in the convolution operation, and therefore it is redundant.
In the equation above pooling operation can replace summation (implicit in matrix multiplication) with any other commutative operation; see~\cite{murphy2018janossy,zhang2018end,xu2018powerful}, we use mean-pooling for the experiments. While the layer of \cref{eq:conv} already achieves state-of-the-art in the experiments, we also consider adding an (equivariant) attention mechanism for further improvement; see \ifbool{APX}{\cref{app:attention}}{Appendix B} for details.

\begin{table}[t!]
\centering
\caption{Performance of various models on point cloud segmentation benchmarks.}
\scalebox{.9}{
\begin{tabular}{lccccccccc}
\toprule
    & \multicolumn{2}{c}{\kw{SEMANTIC-8}} & & \multicolumn{2}{c}{\kw{S3DIS}} & & \multicolumn{3}{c}{\kw{vKITTI}} \\[1pt]
   \multicolumn{1}{l}{} &  \textbf{OA} & \textbf{mIoU}  &   & \textbf{OA} & \textbf{mIoU} & & \textbf{OA} & \textbf{mIoU} &  \textbf{mAcc}   \\
   \midrule

\kw{DeepSets}\cite{zaheer2017deep} & 89.3 & 60.5 && 67.3 & 42.7  && 74.2 & 42.9 & 36.8
\\

\kw{PointNet++}\cite{qi2017pointnet} & 85.7 & 63.1    && 81.0 & 54.5  && 79.7 & 34.4 & 47.0
\\

\kw{SPG}\cite{landrieu2018large} & 92.9 & 76.2 && 85.5 & 62.1 && 84.3 & 67.3 & 52.0
\\

\kw{ConvPoint}\cite{boulch2019generalizing} & 93.4 & 76.5     && 88.8 & 68.2      && - & - & -
\\ 

\kw{KP-FCNN}\cite{thomas2019kpconv} & - & - && - & 70.6   && - & - & - 
\\

\kw{RSNet}\cite{huang2018recurrent} & - & - && - & 56.5 && - & - & - \\

\kw{PCCN}\cite{wang2018deep} & - & - && - & 58.3 && - & - & - \\

\kw{SnapNet}\cite{boulch2018snapnet} & 91.0 & 67.4 && - & - && - & - & - \\

\kw{Engelmann et al. 2018}\cite{engelmann2018know} & - & - && - & - && 79.7 & 57.6 & 35.6 \\

\kw{Engelmann et al. 2017}\cite{engelmann2017exploring} & - & - && - & - && 80.6 & 54.1 & 36.2 \\

\kw{3P-RNN}\cite{ye20183d} & - & - && - & - && 87.8 & 54.1 & 41.6 \\

\midrule
\kw{Wreath Product Net.} (ours) & 93.9 & 75.4 && 90.6 & 71.2 && 88.4 & 68.9 & 58.6      \\
\kw{Wreath Product Net. + Attn} (ours) & \textbf{95.2} & \textbf{77.4}   && \textbf{95.8} & \textbf{80.1}   && \textbf{90.7} & \textbf{69.5} & \textbf{59.2}
\\

\bottomrule
\end{tabular}
}
\label{tab:wreath-results}
\end{table}

\clearpage
\begin{wraptable}{r}{.5\textwidth}
\vspace*{-3.5em}
\centering
\caption{\footnotesize{Comparison of pre-processing and training time and the number of parameters for \kw{semantic-8}.}}\label{table:time}
\scalebox{.85}{
\begin{tabulary}{\linewidth}{l c c c c }
\toprule
Method & \makecell{Pre-\\ Proc. (hrs)}  & Train (hrs)  & \makecell{ \# Params. \\ $\times 10^6$} \\
\midrule
\kw{PointNet} &  8.82 & 3.54 &  3.50 \\
\kw{PointNet++} &  8.84 & 7.46 &  12.40  \\
\kw{SnapNet} &  13.42 & 53.44 &  30.76 \\
\kw{SPG} &  17.43 & {1.50} &  {0.25}  \\
\kw{ConvPoint} & 13.42 & 48.74 &  2.76 \\
\midrule
\kw{Ours} &{4.39} & 53.76 &  5.27 \\
\kw{Ours + Attn} &  {4.39} & 91.68 &  47.01
\\
\bottomrule
\end{tabulary}
}
\vspace*{-.5em}
\end{wraptable}
\subsection{Empirical Results}
\label{sec:experiments}
We evaluate our model on two of the largest real world point cloud segmentation benchmarks,  \kw{semantic3d} \cite{hackel2017semantic3d} and the Stanford Large-Scale 3D Indoor Spaces ( \kw{s3dis}) \cite{armeni20163d}, as well as a dataset of virtual point cloud scenes, the \kw{vKITTI} benchmark \cite{gaidon2016virtual}. As shown in \cref{tab:wreath-results}, in all cases we report new state-of-the-art.
\cref{table:time} compares the processing time and the size of different models.
 The architecture of \kw{Wreath product net.} is a stack of equivariant layer \cref{eq:conv} plus ReLU nonlinearity and residual connections. \kw{Wreath product net.+attn} also adds the attention mechanism.
\footnote{We have released a  \kw{PyTorch} implementation of our models at \url{https://github.com/rw435/wreathProdNet}}
Details on architecture and training, as well as further analysis of our results appear in \ifbool{APX}{\cref{app:experiments}}{Appendix C}.

\begin{wraptable}{r}{0.3\textwidth}
\vspace*{-1.5em}
    \caption{\footnotesize{Effect of voxelization resolution for \kw{semantic-8}}.}\label{table:resolution}
    \vspace*{.5em}
    \centering
\scalebox{.8}{
\begin{tabulary}{0.4\linewidth}{l c c }
\toprule
\makecell{\# Voxels \\ Per Dim} & \makecell{\textbf{OA}}  & \makecell{\textbf{mIoU}} \\
\midrule
2 & 81.7 & 62.7 \\
3 & 85.9 & 67.3 \\
4 & 90.6 & 70.5 \\
5 & 92.3 & 72.2 \\
6 & 93.7 & 72.8 \\
7 & 94.4 & 74.1 \\
8 & 95.1 & 75.6 \\
9 & 94.6 & 77.1 \\
10 & 95.2 & 77.4 \\
12 & 94.8 & 73.0 \\
16 & 90.6 & 71.5 \\
\bottomrule
\end{tabulary}
}
\end{wraptable}
\noindent
\textbf{Outdoor Scene Segmentation -  \kw{semantic3d}}
 \cite{hackel2017semantic3d} is the largest  LiDAR benchmark dataset, consisting of 15 training point clouds and 15 tests point clouds with withheld labels, amassing altogether over 4 billion labeled points from a variety of urban and rural scenes. In particular, rather than working with the smaller \kw{reduced-8} variation, we run our experiments on the full dataset (\kw{semantic-8})\footnote{The leader-board can be viewed at \url{http://www.semantic3d.net/view_results.php?chl=1}}. 
\cref{tab:wreath-results} reports mIoU unweighted mean intersection over union metric (mIoU), as well as the overall accuracy (OA) for various methods.
In \cref{table:resolution} we report these measures for different voxelization resolution. 
At highest resolutions the performance degrades due to overfitting.

\noindent
\textbf{Indoor Scene Segmentation - Stanford Large-Scale 3D Indoor Spaces (\kw{S3DIS})}
 \cite{armeni20163d} consists of various 3D RGB point cloud scans from an assortment of room types on six different floor in three buildings on the Stanford campus, totaling almost 600 million points. 
\cref{tab:wreath-results} show that we achieve the best overall accuracy as well as mean intersection over union. 
This is in spite of the fact that our competition use extensive data-augmentation also for this dataset.
In addition to random jittering and subsampling employed by both \kw{KPConv} and \kw{ConvPoint}, \kw{KPConv} also uses random dropout of RGB data.

\noindent
\textbf{Virtual Scene Segmentation - Virtual KITTI (\kw{vKITTI})}
 \cite{gaidon2016virtual} contains 35 monocular photo-realistic synthetic videos with fully annotated pixel-level labels for each frame and 13 semantic classes in total. Following \cite{engelmann2017exploring}, we project the 2D depth information within these synthetic frames into 3D space, thereby obtaining semantically annotated 3D point clouds. Note that \kw{vKITTI} is significantly smaller than either \kw{Semantic3D} or \kw{S3DIS}, containing only 15 millions points in total. 

\section*{Conclusion}
This paper presents a procedure to design neural networks equivariant to hierarchical symmetries and nested structures. We describe how the imprimitive action of wreath product can formulate the symmetries of the hierarchy, contrast its use case with direct product, and characterize linear maps equivariant to the wreath product of groups. This analysis showed that we can always build an equivariant layer for the hierarchy using equivariant maps for the building blocks, and additional pool-and-broadcast operations. 
We consider one of the many use cases of this approach to design a deep model for 
large-scale semantic segmentation of point cloud data, where we are able to achieve state-of-the-art using
a simple architecture. 

\section*{Broader Impact}
As deep learning finds its way in various real-world applications, the practitioners are finding  more constrains in representing their data
in formats and structures amenable to existing deep architectures. The list of basic structures such as 
images, sets, and graphs that we can approach using deep models has been growing over the past few years. The theoretical contribution of this paper substantially expands this list by enabling deep learning on a hierarchy of structures.
This could potentially unlock new applications in data-poor and structure-rich settings. The task we consider in our experiments is deep learning with large point-cloud data, which is finding growing applications, from autonomous vehicles to geographical surveys. While this is not a new task, our empirical results demonstrate the effectiveness of the proposed methodology in dealing with hierarchy in data structure.

\begin{ack}
This research was in part supported by CIFAR AI Chairs program and NSERC Discovery Grant.
Computational resources were provided by Mila and Compute Canada.

\end{ack}

\bibliographystyle{plainnat}
\bibliography{refs}

\ifbool{APX}{\clearpage
\appendix

\section{Proof of Theorem 4.1}\label{app:proof}
\begin{proof}
We show that both of the two terms in \cref{eq:wreath_prod_w} are equivariant to $\Kg \wr \Hg$-action as expressed by the
permutation matrix of \cref{eq:wreath_perm_matrix}. 
\paragraph{Part 1.} We show that ${\Gm\ly{\gg}}$ commutes with the first term in \cref{eq:wreath_prod_w} for any $\gg \in \Kg \wr \Hg$:
\begin{align}
    \left (\Wm_\Hg \otimes (\vec{1}_Q \vec{1}^\top_Q) \right) {\Gm\ly{\gg}} &= \left (\Wm_\Hg \otimes (\vec{1}_Q \vec{1}^\top_Q) \right)
     \left (\sum_{p=1}^P \mat{1}_{p,\hg \cdot p} \otimes \Gm\ly{\kg_p}\right) \label{eq:p1} \\
    &= \sum_{p=1}^P \left ( \Wm_\Hg \mat{1}_{p,\hg \cdot p} \right ) \otimes
    \overbrace{\left ( (\vec{1}_Q \vec{1}^\top_Q)  \Gm\ly{\kg_p} \right )}^{= (\vec{1}_Q \vec{1}^\top_Q)} \label{eq:p2} \\
    & = \Wm_\Hg \left (\sum_{p=1}^P \mat{1}_{p,\hg \cdot p} \right ) \otimes (\vec{1}_Q \vec{1}^\top_Q) \label{eq:p3} \\
    &= (\Wm_\Hg \Gm\ly{\hg}) \otimes  (\vec{1}_Q \vec{1}^\top_Q)  = ( \Gm\ly{\hg} \Wm_\Hg) \otimes (\vec{1}_Q \vec{1}^\top_Q) \label{eq:p4}\\
    &= \sum_{p=1}^P \left ( \mat{1}_{p,\hg \cdot p} \Wm_\Hg \right) \otimes \left ( (\vec{1}_Q \vec{1}^\top_Q)  \Gm\ly{\kg_p} \right ) \\
    &= \sum_{p=1}^P  \left (\mat{1}_{p,\hg \cdot p} \otimes \Gm\ly{\kg_p}\right)
     \left (\Wm_\Hg \otimes (\vec{1}_Q \vec{1}^\top_Q) \right) \label{eq:p5} \\
    & = \left (\sum_{p=1}^P \mat{1}_{p,\hg \cdot p} \otimes \Gm\ly{\kg_p}\right)
     \left (\Wm_\Hg \otimes (\vec{1}_Q \vec{1}^\top_Q) \right) = {\Gm\ly{\gg}} \left (\Wm_\Hg \otimes (\vec{1}_Q \vec{1}^\top_Q) \right). \label{eq:p6}
\end{align}
In \cref{eq:p1} we substitute from \cref{eq:wreath_perm_matrix}. In \cref{eq:p2} we pulled the summation out, and then used the mixed product property of Kronecker product. We also note that a permutation of the uniformly constant matrix $\vec{1}_Q \vec{1}^\top_Q$ is constant. In \cref{eq:p3}, since the summation can be restricted to the first term we pull out $\Wm_\Hg$. In \cref{eq:p4} we use the fact that the summation of the previous line is the permutation matrix
$\Gm\ly{\hg}$ and apply the key assumption that $\Wm_\Hg$ and $\Wm_\Kg$ are equivariant to $\Hg$ and $\Kg$-action respectively. The following lines repeat the procedure so far in reverse order to commute ${\Gm\ly{\gg}}$ and 
$\left (\Wm_\Hg \otimes (\vec{1}_Q \vec{1}^\top_Q) \right)$.

\paragraph{Part 2.} For the second term, again we use the mixed-product property and 
equivariance of the input maps
\begin{align}
    \left ( \mat{I}_P \otimes \Wm_\Kg \right) {\Gm\ly{\gg}} &= \left ( \mat{I}_P \otimes \Wm_\Kg\right)
     \left (\sum_{p=1}^P \mat{1}_{p,\hg \cdot p} \otimes \Gm\ly{\kg_p}\right) \label{eq:p10} \\
     &=  \sum_{p=1}^P \left (\mat{I}_P \mat{1}_{p,\hg \cdot p} \right ) \otimes 
     \left ( \Wm_\Kg \Gm\ly{\kg_p}\right) \label{eq:p11} \\
     &= \sum_{p=1}^P \left (\mat{1}_{p,\hg \cdot p} \mat{I}_P  \right ) \otimes 
     \left ( \Gm\ly{\kg_p} \Wm_\Kg \right) \label{eq:p12} \\
     &= \sum_{p=1}^P \left ( \mat{1}_{p,\hg \cdot p} \otimes \Gm\ly{\kg_p}\right) 
     \left ( \mat{I}_P \otimes \Wm_\Kg\right) = {\Gm\ly{\gg}}  \left ( \mat{I}_P \otimes \Wm_\Kg \right) \label{eq:p13}
     \end{align}
     where in \cref{eq:p12} we used the fact that the identity matrix commutes with any matrix, as well as the equivariance of 
$\Gm\ly{\kg_p}$. \cref{eq:p11,eq:p13} simply use the mixed product property. 
Putting the two parts together shows the equivariance of the first and second term in \cref{eq:wreath_prod_w}, completing the proof.
\end{proof}

\subsection{Proof of Maximality}
Previously we proved that the linear maps of \cref{eq:wreath_prod_w} is 
equivariant to imprimitive action of wreath product $\Kg \wr \Hg$ given by
\cref{eq:wreath_perm_matrix}. Here, we prove that any $\Kg \wr \Hg$ 
equivariant linear map has this form. 

\begin{claim}
Assuming that any $\Kg$-equivariant ($\Hg$-equivariant) linear map $\Wm_\Kg$ ($\Wm_\Hg$) can be written as a linear combination of $K$ ($H$) independent linear operators, then any $W_{\Kg \wr \Hg}$ as defined in \cref{eq:wreath_prod_w} is a linear combination of $K+H-1$ independent linear bases. 
 \end{claim}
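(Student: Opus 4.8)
The plan is to recast this counting statement as a rank computation for the linear parameterization defined in \cref{eq:wreath_prod_w}. Let $V_\Hg$ denote the space of $\Hg$-equivariant maps on $\Reals^P$, which has dimension $H$ by hypothesis, and let $V_\Kg$ denote the space of $\Kg$-equivariant maps on $\Reals^Q$, of dimension $K$. Define the linear map
\[
\Phi: V_\Hg \oplus V_\Kg \to \Reals^{PQ \times PQ}, \qquad \Phi(\Wm_\Hg, \Wm_\Kg) = \Wm_\Hg \otimes (\vec{1}_Q \vec{1}^\top_Q) + \mat{I}_P \otimes \Wm_\Kg.
\]
By \cref{th:main} the image of $\Phi$ is exactly the space of maps equivariant to the imprimitive action of $\Kg \wr \Hg$, so the number of independent bases equals $\dim(\operatorname{im}\Phi)$. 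Since $\dim(V_\Hg \oplus V_\Kg) = H + K$, by rank--nullity it suffices to show that $\dim(\ker\Phi) = 1$.

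First I would record the single dependency that makes the kernel nontrivial. The identity $\mat{I}_P$ is always $\Hg$-equivariant, since it commutes with every permutation matrix, and the all-ones matrix $\vec{1}_Q\vec{1}^\top_Q$ is always $\Kg$-equivariant, since a permutation matrix fixes $\vec{1}_Q$ on both sides. Hence $\mat{I}_P \in V_\Hg$ and $\vec{1}_Q\vec{1}^\top_Q \in V_\Kg$, and the pair $(\mat{I}_P, -\vec{1}_Q\vec{1}^\top_Q)$ lies in the domain and maps to $\mat{I}_P \otimes (\vec{1}_Q\vec{1}^\top_Q) - \mat{I}_P \otimes (\vec{1}_Q\vec{1}^\top_Q) = 0$. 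This exhibits $\dim(\ker\Phi) \ge 1$: the term $\mat{I}_P \otimes (\vec{1}_Q\vec{1}^\top_Q)$ is double-counted across the two summands of \cref{eq:wreath_prod_w}.

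The substantive step, and the main obstacle, is the reverse inequality $\dim(\ker\Phi) \le 1$, which I would obtain from the $P \times P$ block structure of the two Kronecker terms, each block being $Q \times Q$. Suppose $\Phi(\Wm_\Hg, \Wm_\Kg) = 0$. The $(p,p')$ block of $\Wm_\Hg \otimes (\vec{1}_Q\vec{1}^\top_Q)$ equals $(\Wm_\Hg)_{p,p'}\,\vec{1}_Q\vec{1}^\top_Q$, while $\mat{I}_P\otimes\Wm_\Kg$ contributes $\Wm_\Kg$ on the diagonal blocks and zero off-diagonal. The off-diagonal equations force $(\Wm_\Hg)_{p,p'}=0$ for $p\neq p'$, so $\Wm_\Hg$ is diagonal; the diagonal equations give $\Wm_\Kg = -(\Wm_\Hg)_{p,p}\,\vec{1}_Q\vec{1}^\top_Q$ for every $p$, which forces all diagonal entries of $\Wm_\Hg$ to share a common value $c$, whence $\Wm_\Hg = c\,\mat{I}_P$ and $\Wm_\Kg = -c\,\vec{1}_Q\vec{1}^\top_Q$. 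Thus every kernel element is a scalar multiple of $(\mat{I}_P, -\vec{1}_Q\vec{1}^\top_Q)$, giving $\dim(\ker\Phi)=1$.

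Combining the two inequalities with rank--nullity yields $\dim(\operatorname{im}\Phi) = H+K-1$, the claimed number of independent linear bases. Concretely, one obtains such a basis by choosing bases $\{B_1,\ldots,B_H\}$ of $V_\Hg$ with $B_1 = \mat{I}_P$ and $\{C_1,\ldots,C_K\}$ of $V_\Kg$ with $C_1 = \vec{1}_Q\vec{1}^\top_Q$, forming the $H+K$ products $\{B_i \otimes (\vec{1}_Q\vec{1}^\top_Q)\}_{i=1}^H \cup \{\mat{I}_P \otimes C_j\}_{j=1}^K$, and deleting one of the two coincident copies of $\mat{I}_P\otimes(\vec{1}_Q\vec{1}^\top_Q)$. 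I note that the within-family independence of each product family is automatic, since Kronecker multiplication by a fixed nonzero matrix preserves linear independence, so the only genuine content is the block-structure argument of the previous paragraph, which already subsumes it.
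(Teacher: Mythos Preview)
Your argument is correct and mirrors the paper's: both identify the single redundancy $\mat{I}_P \otimes (\vec{1}_Q\vec{1}^\top_Q)$ between the two summands of \cref{eq:wreath_prod_w}, giving $H+K-1$. Your rank--nullity framing with the explicit block-structure kernel computation is in fact more complete than the paper's proof, which simply asserts that this is the \emph{only} shared basis element without verifying uniqueness. One minor point: your appeal to \cref{th:main} to equate $\operatorname{im}\Phi$ with the full space of equivariant maps is unnecessary here (the claim concerns only $\dim(\operatorname{im}\Phi)$, which you compute directly) and is mildly circular, since the present claim is itself a step in establishing the maximality direction of that theorem.
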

 \begin{proof}
Consider the two terms  
(I) $\Wm_\Hg \otimes (\vec{1}_Q \vec{1}^\top_Q)$ and (II) $\mat{I}_P \otimes \Wm_\Kg$ in \cref{eq:wreath_prod_w}, where (I) has $H$ independent bases and (II) has $K$ independent bases. If they had independent bases, their summation would have $K+H$ linear bases. However, (I) and (II) have exactly one shared basis: $\Wm_\Hg = \mathbf{I}_P$ is $\Hg$-equivariant, and $\Wm_\Kg = \vec{1}_Q \vec{1}^\top_Q$ is $\Kg$-equivariant. Therefore,  $\mathbf{I}_P \otimes (\vec{1}_Q \vec{1}^\top_Q)$ is a basis for both (I) and (II). 
 \end{proof}

Next, we show that any $\Kg \wr \Hg$-equivariant linear map
 can be written using as a weighted sum of $K+H-1$ independent linear operators. Together with proof of equivariance of $\Wm_{\Kg \wr \Hg}$ these two facts prove that any $\Kg \wr \Hg$ 
equivariant linear map has the form of \cref{eq:wreath_prod_w}. 

\begin{claim}
Any linear map that is equivariant to the imprimitive action of $\Kg \wr \Hg$ 
can  a linear combination of $K+H-1$ equivariant linear bases.
\end{claim}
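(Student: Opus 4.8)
The plan is to prove the matching upper bound: show that the space of $\Kg \wr \Hg$-equivariant maps has dimension \emph{at most} $K+H-1$. Since the preceding claim already exhibits a $(K+H-1)$-dimensional family of equivariant maps of the form \cref{eq:wreath_prod_w} sitting inside this space, the two dimensions forcing equality is exactly the statement that \emph{every} equivariant map has the stated form. To do this I would write an arbitrary equivariant $\Wm \in \Reals^{PQ \times PQ}$ in $P \times P$ block form, with $Q \times Q$ blocks $\Wm_{p,p'}$ indexed by fibers $p,p' \in \Ps$. The key simplification I would exploit is that $\Kg \wr \Hg = \Bg \rtimes \Hg$ is generated by its base group $\Bg = \Kg_1 \times \dots \times \Kg_P$ (elements $(\gr{e};\kg_1,\dots,\kg_P)$) together with the section $\{(\hg;\gr{e},\dots,\gr{e}) : \hg \in \Hg\}$; hence commuting with $\Gm\ly{\gg}$ for \emph{all} $\gg$ is equivalent to commuting with these two families, which I would impose in turn.

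First I would impose the base-group constraints. For $\hg = \gr{e}$ the permutation matrix of \cref{eq:wreath_perm_matrix} is block diagonal, so the commutativity condition reads $\Gm\ly{\kg_p}\Wm_{p,p'} = \Wm_{p,p'}\Gm\ly{\kg_{p'}}$ for all \emph{independently chosen} $\kg_p,\kg_{p'} \in \Kg$. On the diagonal ($p=p'$) this says $\Wm_{p,p}$ commutes with every $\Gm\ly{\kg}$, so each diagonal block is $\Kg$-equivariant and lives in a $K$-dimensional space. Off the diagonal ($p \neq p'$), taking $\kg_{p'}=\gr{e}$ forces every column of $\Wm_{p,p'}$ to be $\Kg$-invariant, and taking $\kg_p=\gr{e}$ forces every row to be $\Kg$-invariant; since $\Kg$ acts transitively on $\Qs$, its only invariant vectors are constant, so each off-diagonal block must collapse to $\Wm_{p,p'} = c_{p,p'}\,\vec{1}_Q \vec{1}_Q^\top$ for a single scalar $c_{p,p'}$.

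Next I would impose the section constraints. Taking $\gg = (\hg;\gr{e},\dots,\gr{e})$ gives $\Gm\ly{\gg} = \Gm\ly{\hg} \otimes \mat{I}_Q$, and commuting with $\Wm$ merely permutes the blocks, forcing the assignment $(p,p') \mapsto \Wm_{p,p'}$ to be constant along the $\Hg$-orbits of $\Ps \times \Ps$. By transitivity of $\Hg$ the diagonal of $\Ps \times \Ps$ is a single orbit, so all diagonal blocks equal one common $\Kg$-equivariant matrix $\Wm_\Kg$, while the off-diagonal scalars $[c_{p,p'}]$ are constant on each of the remaining $H-1$ orbitals, i.e. they form the off-diagonal part of an $\Hg$-equivariant matrix $\Wm_\Hg$. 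Collecting terms yields $\Wm = \mat{I}_P \otimes \Wm_\Kg + \Wm_\Hg \otimes (\vec{1}_Q \vec{1}_Q^\top)$, which is exactly \cref{eq:wreath_prod_w}, with $K + (H-1)$ independent scalars, the diagonal contribution of $\Wm_\Hg$ being absorbable into the constant direction $\vec{1}_Q\vec{1}_Q^\top$ of $\Wm_\Kg$ — the single shared basis already identified in the preceding claim.

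The hard part will be the off-diagonal analysis: correctly extracting, from the two-sided relation $\Gm\ly{\kg_p}\Wm_{p,p'} = \Wm_{p,p'}\Gm\ly{\kg_{p'}}$ with \emph{independent} $\kg_p,\kg_{p'}$, that each off-diagonal block is a scalar multiple of $\vec{1}_Q\vec{1}_Q^\top$ — this is precisely where transitivity of $\Kg$ enters and is what rules out extra parameters — together with the bookkeeping that reconciles the single diagonal overlap so the total comes out to $K+H-1$ rather than $K+H$. As a cross-check on the count, I note that the same value is obtained more abstractly: the dimension of the commutant equals the number of orbits of $\Kg \wr \Hg$ on $(\Ps \times \Qs)^2$, where the ``same-fiber'' pairs contribute $K$ orbits and the ``different-fiber'' pairs contribute $H-1$, again totalling $K+H-1$.
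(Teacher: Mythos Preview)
Your argument is correct and takes a genuinely different route from the paper. The paper establishes the dimension count $K+H-1$ purely by character theory: it invokes Burnside's lemma to write the number of orbits of the diagonal $\Kg\wr\Hg$-action on $(\Ps\times\Qs)^2$ as $\frac{1}{|\Gg|}\sum_{\gg}\Tr(\Gm^{(\gg)})^2$, substitutes the block form \cref{eq:wreath_perm_matrix}, and after a page of manipulations splits the sum into a ``$p=p'$'' part that collapses to $K$ and a ``$p\neq p'$'' part that collapses to $H-1$, using transitivity of $\Hg$ and $\Kg$ along the way. Your approach instead works structurally: you reduce to generators (base group and $\Hg$-section), then read off the constraints block by block, using transitivity of $\Kg$ to force off-diagonal blocks to be scalar multiples of $\vec{1}_Q\vec{1}_Q^\top$ and transitivity of $\Hg$ to tie the diagonal blocks together. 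The payoff of your route is that it is constructive --- you directly recover the form \cref{eq:wreath_prod_w} for an arbitrary equivariant $\Wm$, so the ``maximality'' conclusion does not need the separate dimension-matching step --- and it avoids the somewhat heavy Burnside computation. The paper's route, on the other hand, is more mechanical and would generalize more readily if one wanted to count commutant dimensions for other compositions without re-deriving the block structure each time. Your final orbit-counting cross-check (same-fiber pairs give $K$ orbits, different-fiber pairs give $H-1$) is in fact the combinatorial shadow of the paper's Burnside calculation, so the two arguments meet at the same invariant.
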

\begin{proof}
Let $\Wm \in \Reals^{N \times N}$ be the matrix representing this equivariant map, satisfying
$\Wm \Gm^{\gg} = \Gm^{\gg}  \Wm \quad \forall \gg \in \Kg \wr \Hg$. Since $\Gm^{{\gg}^\top} \Wm \Gm^{\gg} =  \Wm \quad \forall \gg \in \Kg \wr \Hg$, all elements of the matrix $\Wm$ that are in the same orbit are tied together. This constraint means that the number of unique values in $\Wm$ are equal to the number of orbits in this simultaneous action on rows and columns of $\Wm$. Using a property of Kronecker product, we can also write the equation above as $\operatorname{vec}(\Gm^{\gg} \Wm \Gm^{{\gg}^\top}) = (\Gm^{\gg} \otimes  \Gm^{\gg}) \operatorname{vec}(\Wm)$. Therefore, the diagonal group action is given by $\Gm^{\gg} \otimes  \Gm^{\gg}$.
Using \emph{Burnside's lemma} we get the number of orbits $L$ in this action to be
\begin{align}\label{eq:burnside}
    L = \frac{1}{|\Gg|} \sum_{\gg \in \Kg \wr \Hg} \Tr(\Gm^{(\gg)} \otimes  \Gm^{(\gg)}) = \sum_{\gg \in \Kg \wr \Hg} \Tr(\Gm^{(\gg)})^2
\end{align}
Our objective is to show that $L = H+K-1$. Our strategy is to replace the definition of $\Gm^\gg$ from \cref{eq:wreath_perm_matrix} into \cref{eq:burnside} and simplify. This simplification involves numerous steps.
\begin{align}
    L & = \frac{1}{|\Hg| |\Kg|^P} \sum_{\hg \in \Hg, \kg_{1}, \ldots, \kg_{P} \in \Kg} \Tr \left (\sum_{p=1}^P \mathbf{1}_{p, \hg \cdot p} \otimes \Gm^{(\kg_p)} \right)^2 \label{pr:1}\\
     & = \frac{1}{|\Hg| |\Kg|^P} \sum_{\hg \in \Hg, \kg_{1}, \ldots, \kg_{P} \in \Kg}  \Tr \left(\sum_{p=1}^P \Gm^{(\hg)}_{p,p} \Gm^{(\kg_p)} \right)^2 \label{pr:2} \\
     & = \frac{1}{|\Hg| |\Kg|^P}  \sum_{\hg \in \Hg, \kg_{1}, \ldots, \kg_{P} \in \Kg}
     \left ( \sum_{p=1}^P \Gm^{(\hg)}_{p,p} \Tr \left ( \Gm^{(\kg_p)} \right )    \right)
     \left ( \sum_{p'=1}^P \Gm^{(\hg)}_{p',p'} \Tr \left ( \Gm^{(\kg_{p'})} \right )    \right) \label{pr:3} \\
     & = \frac{1}{|\Hg| |\Kg|^P} \sum_{\hg \in \Hg, \kg_{1}, \ldots, \kg_{P} \in \Kg} 
     \left (\sum_{p=1}^P \Gm^{(\hg)}_{p,p} \Tr \left ( \Gm^{(\kg_p)} \right )^2 \right) + \notag \\
     & \left ( \sum_{p=1}^P \sum_{p'\neq p, p'=1}^P  \Gm^{(\hg)}_{p,p}  \Gm^{(\hg)}_{p',p'} 
     \Tr \left ( \Gm^{(\kg_p)} \right ) \Tr \left ( \Gm^{(\kg_{p'})} \right ) \right ) \label{pr:4}
\end{align}
where in \cref{pr:2} we observed that the contribution from $\Gm^{(\kg_p)}$ to the trace of $\Gg^{(\gg)}$ is non-zero only if $\Gm^{(\hg)}_{p,p} = 1$. Note that all the $\Gm$ matrices are permutation matrices and in the following we will continue to use this fact without elaborating. In \cref{pr:3} we used linearity of trace, and in \cref{pr:4} we divided the product of the previous line into $p=p'$ and $p \neq p'$. As we show below, the first term simplifies to $K$ and the second terms simpliefies to $H-1$, showing that $L = K +H -1$.
We start by simplifying the first term of \cref{pr:4} below:
\begin{align}
    & \frac{1}{|\Hg| |\Kg|^P} \sum_{\hg \in \Hg, \kg_{1}, \ldots, \kg_{P} \in \Kg} 
     \left (\sum_{p=1}^P \Gm^{(\hg)}_{p,p} \Tr \left ( \Gm^{(\kg_p)} \right )^2 \right) \\
     = & \frac{1}{|\Hg| |\Kg|^P} \sum_{p=1}^P \left (\sum_{\hg \in \Hg} \Gm^{(\hg)}_{p,p} \right) \left( \sum_{\kg_{1}, \ldots, \kg_{P} \in \Kg} \Tr(\Gm^{(\kg_p)})^2 \right) \\
     = & \frac{1}{|\Hg| |\Kg|^P} \left (\sum_{\hg \in \Hg}  \sum_{p=1}^P \Gm^{(\hg)}_{p,p} \right) \left (|\Kg|^{P-1} \sum_{\kg \in \Kg} \Tr(\Gm^{(\kg)})^2 \right) \\
     = & \frac{1}{|\Hg| |\Kg|^P} \left (\sum_{\hg \in \Hg}  \Tr(\Gm^{(\hg)}) \right) 
     \left (|\Kg|^{P-1} |\Kg| K \right) \label{pr:5} \\
     = & \frac{1}{|\Hg| |\Kg|^P} |\Hg| |\Kg|^P K = K \label{pr:6}
\end{align}
where beside simple algebra, in \cref{pr:5}, we used the Burnside lemma on $\Wm_{\Kg}$, which gives us the number of unique parameters (orbits) as $K = \frac{1}{|\Kg|} \sum_{\kg \in \Kg} \Tr(\Gm^{(\kg)})^2$. 
Similarly for $\Hg$-action we get $\frac{1}{|\Hg|} \sum_{\hg \in \Hg} \Tr( \Gm^{(\hg)})^2 = H$.
In \cref{pr:6} we used the fact that $\Hg$ action on $\set{P}$ is transitive (\ie it has a single orbit), to get
$ |\Hg| = \sum_{\hg \in \Hg} \Tr(\Gm^{(\hg)})$. Similarly, due to transitivity of $\Kg$-action on $\set{Q}$
we have $ |\Kg| = \sum_{\kg \in \Kg} \Tr(\Gm^{(\kg)})$.
Next, we use these same equalities to simplify the second term of \cref{pr:4}:
\begin{align}
    & \frac{1}{|\Hg| |\Kg|^P}  \sum_{\hg \in \Hg, \kg_{1}, \ldots, \kg_{P} \in \Kg} \sum_{p=1}^P \sum_{p'\neq p, p'=1}^P  \Gm^{(\hg)}_{p,p}  \Gm^{(\hg)}_{p',p'} 
     \Tr \left ( \Gm^{(\kg_p)} \right ) \Tr \left ( \Gm^{(\kg_{p'})} \right ) \\
     = & \frac{1}{|\Hg| |\Kg|^P} \sum_{p=1}^P \sum_{p'\neq p, p'=1}^P \left ( \sum_{\hg \in \Hg} \Gm^{(\hg)}_{p,p}  \Gm^{(\hg)}_{p',p'}  \right )
     \left ( |\Kg |^{P-2} \left (\sum_{\kg \in \Kg} \Tr(\Gm^{(\kg_{p'})} \right )^2 \right) \\
     = &  \frac{1}{|\Hg| |\Kg|^P} \left ( \sum_{\hg \in \Hg}
     \left (  \sum_{p=1}^P \sum_{p'=1}^P \Gm^{(\hg)}_{p,p}  \Gm^{(\hg)}_{p',p'} \right) - \left (  \sum_{p=1}^P  \Gm_{p,p}^{{(\hg)}^2} \right )
     \right ) \left (
     |\Kg|^{P-2} |\Kg|^2
     \right ) \\
     & = \frac{1}{|\Hg|} \sum_{\hg \in \Hg} \left ( \sum_{p=1}^P  \Gm^{(\hg)}_{p,p}  \right ) \left ( \sum_{p'=1}^P  \Gm^{(\hg)}_{p',p'}  \right ) -  \left (  \sum_{p=1}^P  \Gm^{(\hg)}_{p,p} \right ) \\
     & = \frac{1}{|\Hg|} \sum_{\hg \in \Hg} \Tr( \Gm^{(\hg)})^2 -  \Tr( \Gm^{(\hg)}) 
      = H - 1 \label{pr:7}.
\end{align}
\cref{pr:6,pr:7} together with \cref{eq:burnside} show that $L = H + K -1$, that any linear map $\Wm_{\Gg}$ equivariant to imprimitive action of 
$\Kg \wr \Hg$, can be written using $H+K-1$ independent linear bases. Considering this proof together with our proof for the first part (that the equivariant map of \cref{eq:wreath_prod_w} has $H+K-1$ independent linear bases) completes the proof that any $\Kg \wr \Hg$-equivariant linear map is of the form \cref{eq:wreath_prod_w}.
\end{proof}

\section{Adaptive Pooling and Attention}\label{app:attention}
While the layer of \cref{eq:conv} performs competitively, to improve its performance we add a novel attention mechanism.  To this end we complement the pool-and-broadcast scheme of \cref{eq:conv} with an adaptive pooling mechanism using a learned $\tilde{\Pi}: \Reals^{N} \to \Delta_{L}$ in \cref{eq:conv}, where $\Delta_L$ is the $L$-dimensional probability simplex. Here, one may interpret $L$ as the number of \emph{latent classes}, analogous to $D^3$ voxels.
The pooling matrix $\tilde{\Pi}$, is a function of input through a linear map $\W_3\ly{\ell} \in \Reals^{C \times L}$ followed by $\mathrm{Softmax}:\Reals^{L} \to (0,1)^{L}$, so that the probability of each point belonging to different classes sums to one -- that is 
\begin{align}
    \tilde{\Pi}_{n} = \mathrm{Softmax}\left( (\Xm \Wm_{3})_{n} \right ) \forall n \in [N].
\end{align}
We then use a linear function that models the interaction between latent classes, for each pair of channels. This is done using a rank four tensor $\W_{4} \in \Reals^{L \times L \times C \times C}$. As before the result is broadcasted back using $\tilde{\Pi}_n^\top$, giving us the adaptive pooling layer, in which the output for channel $c$ is given by
\begin{align}\label{eq:attn}
    \phi(\Xm)_c = \sum_{c' = 1}^{C'}{\tilde{\Pi}^\top} \W_{4,c,c'} \tilde{\Pi} \X_c.
\end{align}
Intuitively this layer can decide which subset of nodes to pool, and therefore acts like an equivariant attention mechanism. 
In experiments, we see further improvement in results when adding this nonlinear layer to the linear layer of \cref{eq:conv}.

\section{Details on Experiments}\label{app:experiments}

 \subsection{Architecture and Data Processing}
 Our results are produced using minor architectural variations and limited hyperparameter search. Concretely, our best-performing models on \kw{Semantic3D} and \kw{S3DIS} involve 56 residual-style blocks, where each block comprises of 2 equivariant linear maps of \cref{eq:conv}. Each map involves a single 3D convolution operator with periodic padding and a $3 \times 3 \times 3$ kernel. We use an identity map to facilitate skip connections within these blocks. The number channels is fixed 64, except the final block, which maps to the number of segmentation classes (8 for \kw{Semantic3D} and 13 for \kw{S3DIS}). When incorporating adaptive pooling, we utilize 5, 10, 15, 25 and 50 latent classes $L$ for blocks 1-19, 19-29, 30-39, 40-49 and 50-56, respectively and inclusively. Our best-performing model on \kw{vKITTI} uses 22 such residual-style blocks with identical architectural hyperparameters. For adaptive pooling, we then use 5, 10, 15, and 20 latent classes for blocks 1-9, 10-14, 15-19, and 20-22, respectively and inclusively. 

Additionally, while previous works rely on stochastic point cloud subsampling during pre-processing and reprojection during post-processing, we simply split large point clouds into subsets of 1 million points each to ameliorate memory issues. We then compute a $9 \times 9 \times 9$ voxelization over each sample, and train on mini-batches of 4 such samples. To generate predictions, we stitch predictions on these smaller samples together.

\subsection{Datasets: Training, Validation and Test}
\paragraph{Outdoor Scene Segmentation -  \kw{semantic3d}
\kw{Semantic3D} \cite{hackel2017semantic3d}}   We hold out 4 of the available 15 training point clouds to use as a validation set, as in \cite{landrieu2018large}.

\begin{figure*}[ht!]
\begin{center}
\includegraphics[width=\columnwidth]{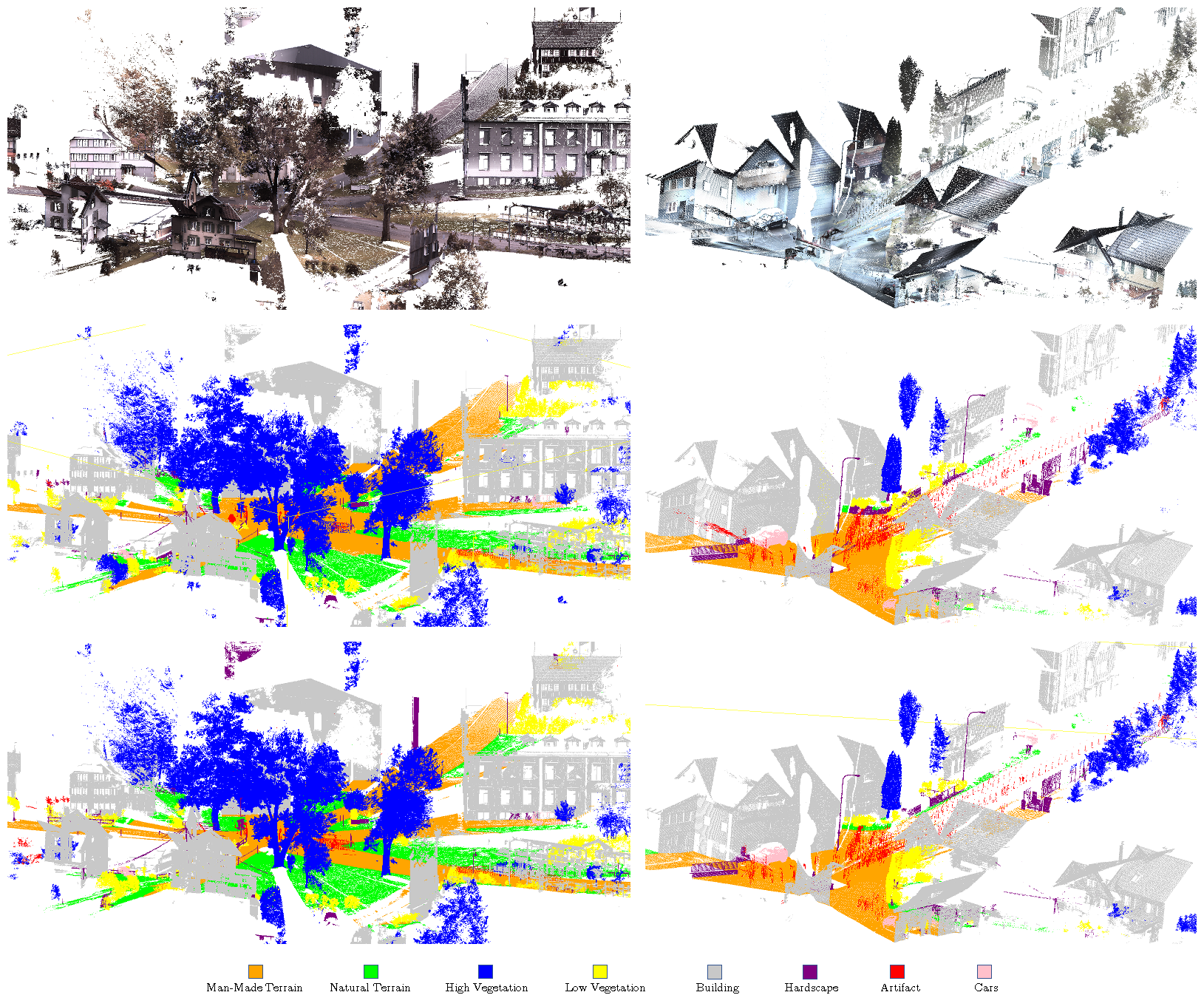}
\caption{(\textbf{top left}) RGB scan of a point cloud with 280 994 028 points from the  \kw{semantic3d} dataset;
(\textbf{middle left}) segmentation results from our method;
(\textbf{bottom left}) ground truth segmentation;
(\textbf{top right}) RGB scan of a point cloud with 19 767 991 points from the  \kw{semantic3d} dataset.
(\textbf{middle right}) segmentation results from our method;
(\textbf{bottom right}) ground truth segmentation;}
\label{sem8-sota}
\end{center}
\vskip -0.05 in
\end{figure*}

\cref{tab:semantic3d} reports the overall accuracy (OA), and mean intersection over union (mIoU) for our method and the competition. We achieve both the best overall accuracy as well as the best mean intersectio over union. 

\begin{table*}[ht]
\centering
\footnotesize
\scalebox{.67}{
\begin{tabulary}{0.4\linewidth}{l|cc|c|c|c|c|c|c|c|cr}
\toprule
Method & \textbf{OA} & \textbf{mIoU} & \makecell{Man-Made \\ Terrain}  & \makecell{Natural \\ Vegetation} &  \makecell{High \\ Vegetation} & \makecell{Low \\ Vegetation} & Buildings & Hardscape & \makecell{Scanning \\ Artifacts} & Cars \\
\midrule
\kw{DeepSets}\cite{zaheer2017deep} & 89.3 & 60.5 & 90.8 & 76.3 & 41.9 & 22.1 & 94.0 & 46.5 & 26.8 & 85.4 \\
\kw{PointNet}\cite{qi2017pointnet} & 85.7 & 63.1 & 81.9 & 78.1 & 64.3 & 51.7 & 75.9 & 36.4 & 43.7 & 72.6 \\
\kw{SnapNet}\cite{boulch2018snapnet} & 91.0 & 67.4 & 89.6 & 79.5 & 74.8 & 56.1 & 90.9 & 36.5 & 34.3 & 77.2 \\
\kw{SPG}\cite{landrieu2018large} & 92.9 & 76.2 & 91.5 & 75.6 & \textbf{78.3} & 71.7 & 94.4 & \textbf{56.8} & 52.9 & 88.4 \\
\kw{ConvPoint}\cite{boulch2019generalizing} & 93.4 & 76.5 & 92.1 & 80.6 & 76.0 & \textbf{71.9} & 95.6 & 47.3 & \textbf{61.1} & 87.7 \\
\midrule
\vox & 93.9 & 75.4 & 93.4 & 84.0 & 76.4 & 68.2 & 96.8 & 45.6 & 47.4 & 91.9 \\
\comb & \textbf{94.6} & \textbf{77.1} & \textbf{95.2} & \textbf{87.1} & 75.3 & 67.1 & \textbf{96.1} & 51.3 & 51.0 & \textbf{93.4} \\
\bottomrule
\end{tabulary}
}
\caption{Performance of various models on the \emph{full}  \kw{semantic3d} dataset (semantic-8). Higher is better, bolded is best. mIoU is unweighted mean intersection over union metric. OA is overall accuracy. Per class splits show mIoU.}
\label{tab:semantic3d}
\end{table*}

\paragraph{Indoor Scene Segmentation - Stanford Large-Scale 3D Indoor Spaces (\kw{S3DIS})}
The  \kw{s3dis} dataset \cite{armeni20163d} consists of various 3D RGB point cloud scans from an assortment of room types on six different floor in three buildings on the Stanford campus, totaling almost 600 million points. Following previous works by \cite{qi2017pointnet, qi2017pointnet++, engelmann2017exploring, tchapmi2017segcloud, landrieu2018large}, we perform 6-fold cross validation with micro-averaging, computing all metrics once over the merged predictions of all test folds.

Here, we achieve the best overall accuracy as well as mean intersection over union. 
This is in spite of the fact that our competition use extensive data-augmentation also for this dataset.
In particular, both \kw{KPConv} and \kw{ConvPoint} use random jittering and subsampling. 

\begin{table*}[ht!]
\scalebox{.55}{
\begin{tabulary}{0.4\textwidth}{l|cc|c|c|c|c|c|c|c|c|c|c|c|c|cr}
\toprule
Method & \textbf{OA} & \textbf{mIoU} & Ceiling & Floor & Wall & Beam & Column & Window & Door & Chair & Table & Bookcase & Sofa & Board & Clutter \\
\midrule
\kw{DeepSets}\cite{zaheer2017deep} & 67.3 & 42.7 & 81.1 & 72.4 & 67.2 & 16.9 & 25.8 & 44.2 & 48.5 & 51.0 & 49.8 & 21.7 & 24.4 & 17.2 & 34.6  \\
\kw{PointNet}\cite{qi2017pointnet} & 78.5& 47.6 &88.0 & 88.7 & 69.3 & 42.4 & 23.1 & 47.5 & 51.6 & 42.0 & 54.1 & 38.2 & 9.6 & 29.4 & 35.2 \\
\kw{RSNet}\cite{huang2018recurrent} & N/A & 56.5 & 92.5 & 92.8 & 78.6 & 32.8 & 34.4 & 51.6 & 68.1 & 60.1 & 59.7 & 50.2 & 16.4 & 44.9 & 52.0 \\
\kw{PCCN}\cite{wang2018deep} & N/A & 58.3 & 92.3 & 96.2 & 75.9 & 0.27 & 6.0 & 69.5 & 63.5 & 65.6 & 66.9 & 68.9 & 47.3 & 59.1 & 46.2 \\
\kw{SPG}\cite{landrieu2018large} & 85.5 & 62.1 & 89.9 & 95.1 & 76.4 & 62.8 & 47.1 & 55.3 & 68.4 & 73.5 & 69.2 & 63.2 & 45.9 & 8.7 & 52.9 \\
\kw{ConvPoint}\cite{boulch2019generalizing} & 88.8 & 68.2 & 95.0 & \textbf{97.3} & 81.7 & 47.1 & 34.6 & 63.2 & 73.2 & \textbf{75.3} & \textbf{71.8} & 64.9 & 59.2 & 57.6 & 65.0 \\
\kw{KP-FCNN}\cite{thomas2019kpconv} & N/A & 70.6 & 93.6 & 92.4 & 83.1 & 63.9 & 54.3 & 66.1 & 76.6 & 57.8 & 64.0 & 69.3 & \textbf{74.9} & 61.3 & 60.3  \\
\midrule
\vox & 90.6 & 71.2 & 94.3 & 96.7 & 80.6 & 65.0 & 76.2 & 62.1 & 71.9 & 64.3 & 62.7 & 60.2 & 68.8 & 63.5 & 59.9 \\
\comb & \textbf{95.8} & \textbf{80.1} & \textbf{97.2} & \textbf{97.8} & \textbf{88.6} & \textbf{72.3} & \textbf{82.0} & \textbf{73.6} & \textbf{78.6} & \textbf{75.1} & \textbf{72.0} & \textbf{73.1} & 71.4 & \textbf{82.1} & \textbf{77.2} \\
\bottomrule
\end{tabulary}
}
\caption{Performance of various models on the \kw{s3dis} dataset (micro-averaged over all 6 folds). Higher is better, bolded is best. mIoU is unweighted mean intersection over union metric. OA is overall accuracy. Per class splits show mIoU.}
\label{tab:s3dis}
\end{table*}


\subsection{Virtual Scene Segmentation - Virtual KITTI}
The \kw{vKITTI} dataset \cite{gaidon2016virtual} contains 35 monocular photo-realistic synthetic videos with fully annotated pixel-level labels for each frame and 13 semantic classes in total. Following \cite{engelmann2017exploring}, we project the 2D depth information within these synthetic frames into 3D space, thereby obtaining semantically annotated 3D point clouds. Similar to the training and evaluation scheme in \cite{landrieu2019point, engelmann2017exploring}, we separate the original set of sequences into 6 non-overlapping subsequences and use a 6 fold cross-validation protocol (with micro-averaging similar to the methodology on \kw{S3DIS}).

\begin{table*}[ht]
\centering
\scalebox{.7}{
\begin{tabulary}{0.4\textwidth}{l|c|c|cr}
\toprule
Method & OA & mIoU & mAcc \\
\midrule
\kw{DeepSets}\cite{zaheer2017deep} & 74.2 & 42.9 & 36.8 \\
\kw{PointNet}\cite{qi2017pointnet} & 79.7 & 34.4 & 47.0 \\
\kw{Engelmann et al. 2018}\cite{engelmann2018know} & 79.7 & 57.6 & 35.6 \\
\kw{Engelmann et al. 2017}\cite{engelmann2017exploring} & 80.6 & 54.1 & 36.2 \\
\kw{3P-RNN}\cite{ye20183d} & 87.8 & 54.1 & 41.6 \\
\kw{SPG}\cite{landrieu2019point} & 84.3 & 67.3 & 52.0 \\
\midrule
\vox & 88.4 & 68.9 & 58.6 \\
\comb & \textbf{90.7} & \textbf{69.5} & \textbf{59.2} \\
\bottomrule
\end{tabulary}
}
\vskip 0.1 in
\caption{Performance of various models on the \kw{vKITTI} dataset (micro-averaged over all 6 folds). Higher is better, best results are in bold. mIoU is unweighted mean intersection over union metric. OA is overall accuracy. mAcc is mean accuracy.}
\label{tab:vkitti}
\end{table*}


Note that \kw{vKITTI} is significantly smaller than either \kw{Semantic3D} or \kw{S3DIS}, containing only 15 millions points in total. We hypothesize that these smaller, and sparser point clouds provide little geometric signal outside vegetation and road structure. This partially explains our only incremental improvement over the state of the art (see Table \ref{tab:vkitti}). However, we expect future simulations of point cloud scenes to become increasingly dense, in line with increasingly powerful LiDAR scanners for real world applications \cite{fang2018simulating, sadakuni2019construction}, where our simple baselines can potentially produce significant gain both in accuracy and computation.

}{}

\end{document}